\def\1{\bm{1}}
\DeclareMathAlphabet{\mathsfit}{\encodingdefault}{\sfdefault}{m}{sl}
\SetMathAlphabet{\mathsfit}{bold}{\encodingdefault}{\sfdefault}{bx}{n}
\newtheorem{theorem}{Theorem}
\newtheorem{lemma}[theorem]{Lemma}
\newtheorem{definition}[theorem]{Definition}
\newtheorem{proposition}[theorem]{Proposition}
\newcommand{\xref}[1]{\S\ref{#1}}
\newcommand{\header}[1]{\vspace{0.05in}\noindent\textbf{#1}}
\newtheorem*{proposition2*}{Proposition 2}
\newtheorem*{lemma3*}{Lemma 2}
\newtheorem*{lemma4*}{Lemma 3}
\newtheorem*{theorem5*}{Theorem 2}
\newtheorem*{theorem6*}{Theorem 3}
\newtheorem*{lemma7*}{Lemma 4}
\icmltitlerunning{Functional Transparency for Structured Data: a Game-Theoretic Approach}
\begin{document}

\twocolumn[
\icmltitle{Functional Transparency for Structured Data: a Game-Theoretic Approach}




\begin{icmlauthorlist}
\icmlauthor{Guang-He Lee}{to}
\icmlauthor{Wengong Jin}{to}
\icmlauthor{David Alvarez-Melis}{to}
\icmlauthor{Tommi S. Jaakkola}{to}
\end{icmlauthorlist}

\icmlaffiliation{to}{MIT Computer Science \& Artificial Intelligence Lab}

\icmlcorrespondingauthor{Guang-He Lee}{guanghe@csail.mit.edu}

\icmlkeywords{Machine Learning, ICML}

\vskip 0.3in
]



\printAffiliationsAndNotice{}  

\begin{abstract}
We provide a new approach to training neural models to exhibit transparency in a well-defined, functional manner. Our approach naturally operates over structured data and tailors the predictor, functionally, towards a chosen family of (local) witnesses. The estimation problem is setup as a co-operative game between an unrestricted \emph{predictor} such as a neural network, and a set of \emph{witnesses} chosen from the desired transparent family. The goal of the witnesses is to highlight, locally, how well the predictor conforms to the chosen family of functions, while the predictor is trained to minimize the highlighted discrepancy. We emphasize that the predictor remains globally powerful as it is only encouraged to agree locally with locally adapted witnesses. We analyze the effect of the proposed approach, provide example formulations in the context of deep graph and sequence models, and empirically illustrate the idea in chemical property prediction, temporal modeling, and molecule representation learning.
\end{abstract}

\section{Introduction}
\label{sec:intro}

\begin{figure*}[t]
\centering
\begin{subfigure}{0.48\textwidth}
  \centering
  \includegraphics[width=0.8\textwidth]{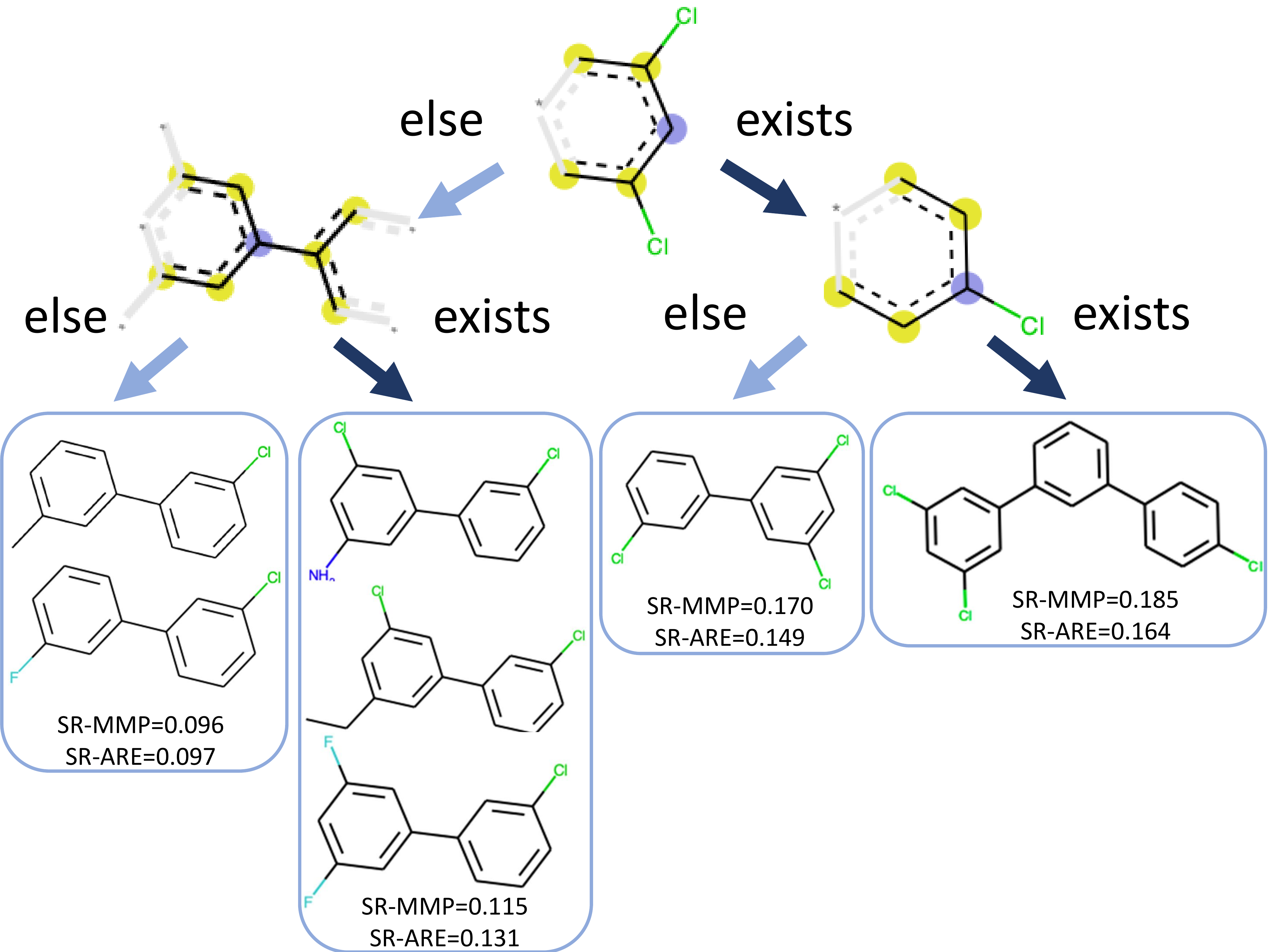}
  \caption{The explanation from our model (trained for transparency).}\label{fig:loclin}
\end{subfigure}%
~
\begin{subfigure}{0.48\textwidth}
  \centering
  \includegraphics[width=0.8\linewidth]{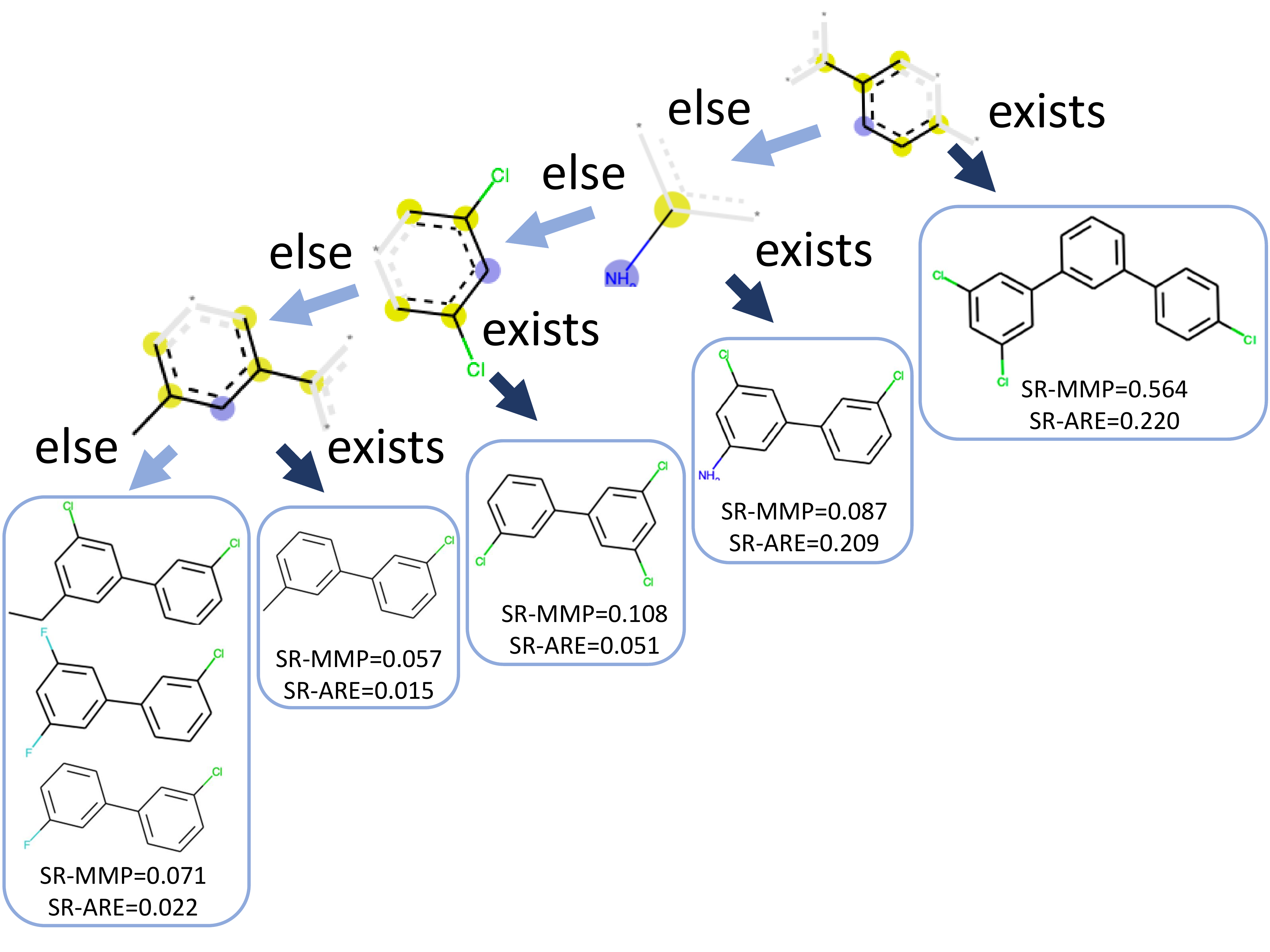}
  \caption{The explanation from a normally trained model.}\label{fig:loccir}
\end{subfigure}
\vspace{-1mm}
\caption{During testing, we fit decision trees to our model and an unregularized model on molecule property prediction at the same local neighborhood such that the functional approximations are comparable in AUC (because the scale is not crucial). The split criterion on each node is based on the existence of a \emph{complete chemical substructure} in Morgan fingerprints~\cite{rogers2010extended}. The color of each Morgan fingerprint simply reflects the radius of the fingerprint.}
\label{fig:teaser}
\vspace{-3mm}
\end{figure*}

Modern machine learning tasks are increasingly complex, requiring flexible models with large numbers of parameters such as deep networks~\citep{silver2016mastering, vaswani2017attention, huang2017densely}. Such modeling gains often come at the cost of transparency or interpretability. This is particularly problematic when predictions are fed into decision-critical applications such as medicine where the ability to verify predictions may be just as important as the raw predictive power.

It seems plausible to guide a flexible neural network towards a complex yet well-understood (i.e., transparent) functional class.
For example, in realizing Wasserstein-1 distance~\citep{arjovsky2017wasserstein}, the discriminator should be limited to 1-Lipschitz functions. 
A strict adherence to a complex, global functional class is not the only way to achieve transparency. 
For example, linearity is a desirable characteristic for transparency but is sensible to enforce only locally. 
We offer therefore a new notion of transparency -- functional transparency -- where the goal is to guide models to adopt a desirable local behavior yet allowing them to be more flexible globally. 
Note that functional transparency should be established only approximately in many cases since, e.g., strict local linearity implies global linearity. 

Previous approaches to interpretability have mainly focused on models that operate on fixed-size data, such as scalar-features~\cite{lakkaraju2016interpretable} or image prediction~\cite{selvaraju2016grad, mahendran2015understanding}. The emphasis has been on feature relevance or selection~\citep{Ribeiro2016Why}. Recent methods do address some of the challenges in sequential data \cite{Lei2016Rationalizing,arras2017relevant}, primarily in NLP tasks where the input sequence is discrete. Interpretability for continuous temporal data~\cite{al2017contextual, wu2018tree} or graph structures remains largely unexplored.

We develop a novel approach to transparency that is naturally suited for structured data. At the core of our approach is a game-theoretic definition of transparency. This is set up as a two-player co-operative game between a \emph{predictor} and a \emph{witness}. The predictor remains a complex model whereas the witness is chosen from a simple transparent family. Transparency arises from the fact that the predictor is encouraged to exemplify simple behavior as captured by the witness in each local region while remaining globally powerful. The approach differs from global regularization of models towards interpretability~\cite{wu2018tree}, models that are constructed a priori to be interpretable, either architecturally or in terms of the function class~\cite{al2017contextual,Lei2016Rationalizing}, or from post-hoc explanations of black-box methods via local perturbations~\cite{Ribeiro2016Why, alvarez2017causal}. Our models are guided towards functional transparency during learning. 

As an illustration, we contrast our approach with methods that seek to obtain interpretable explanations after the fact (e.g., \citep{Ribeiro2016Why}). Derived explanation after training can be misleading in some cases if the explanation does not match the functional behavior of the model. For example, Figure~\ref{fig:teaser} shows local decision tree approximations for two models: our model trained with such local witnesses (a, left), and an unregularized model (b, right). The trees are constructed to achieve the same level of approximation. The tree for the unregularized model only filters one sample in each split, lacking generality to explain the (local) behavior. This phenomenon is related to unstable explanations that arise with already trained models~\cite{alvarez2018robustness, ghorbani2017interpretation}. 

The game theoretic approach is very flexible in terms of models and scenarios. We therefore illustrate the approach across a few novel scenarios: explaining graph convolutional models using decision trees, revealing local functional variation of a deep sequence model, and exemplifying decision rules for the encoder in unsupervised graph representation learning. Our main contributions are: 
\begin{itemize}
	\vspace{-2mm}
	\itemsep0em
	\item A novel game-theoretic approach to transparency, applicable to a wide range of prediction models, architectures, and local transparency classes, without requiring differentiability. 
    \item Analysis on the effective size of the local regions and establishing equilibria pertaining to different game formulations.
    \item Illustration of deep models across several tasks, from chemical property prediction, physical component modeling, to molecule representation learning. 
\end{itemize}
\vspace{-1mm}
\vspace{-2mm}
\section{Related Work}
\vspace{-1mm}
\label{sec:related}

The role of transparency is to expose the inner-workings of an algorithm~\citep{citron2014scored, pasquale2015black}, such as decision making systems. This is timely for state-of-the-art machine learning models that are typically over-parameterized~\citep{silver2016mastering, he2016deep} and therefore effectively black-box models. An uncontrolled model is also liable to various attacks~\citep{goodfellow14}. 

Our goal is to regularize a complex deep model so that it exhibits a desired local behavior. The approach confers an approximate operational guarantee rather than directly interpretability. In contrast, examples of archetypal interpretable models include linear classifiers, decision trees~\citep{quinlan2014c4}, and decision sets~\citep{lakkaraju2016interpretable}; recent approaches also guide complex models towards highlighting pieces of input used for prediction~\citep{Lei2016Rationalizing}, grounding explanations via graphical models~\citep{al2017contextual}, or generalizing linear models while maintaining interpretability~\citep{alvarez2018towards}. A model conforming to a known functional behavior, at least locally, as in our approach, is not necessarily itself human-interpretable. The approximate guarantee we offer is that the complex model indeed follows such a behavior and we also quantify to what extent this guarantee is achieved.

Previous work on approximating a functional class via neural networks can be roughly divided into two types: parametrization-based and regularization-based methods. Works in the first category seek self-evident adherence to a functional class, which include maintaining Lipschitz continuity via weight clipping~\citep{arjovsky2017wasserstein}, orthogonal transformation via scaled Cayley transform of skew-symmetric matrices~\citep{helfrich2017orthogonal}, and ``stable'' recurrent networks via spectral norm projection on the transition matrix~\citep{miller2018recurrent}.

A softer approach is to introduce a regularization problem that encourages neural networks to match properties of the functional class. 
Such regularization problem might come in the form of a gradient penalty as used in several variants of GAN~\citep{gulrajani2017improved, bellemare2017cramer, mroueh2017sobolev} under the framework of integral probability metrics (IPM)~\citep{muller1997integral}, layer-wise regularization of transformation matrices~\citep{cisse2017parseval} towards parseval tightness~\citep{kovavcevic2008introduction} for robustness, and recent adversarial approaches to learn representations for certain independence statements~\citep{ganin2016domain, zhao2017learning}.
Typically, a tailored regularization problem is introduced for each functional class. Our work follows this general theme in the sense of casting the overall problem as a regularization problem. However, we focus on transparency and our approach -- a general co-operative game -- is quite different. Our methodology is applicable to any choice of (local) functional class without any architectural restrictions on the deep model whose behavior is sculpted. The optimization of functional deviation in the game must remain tractable, of course.

\vspace{-1mm}
\section{Methodology}
\vspace{-1mm}
\label{sec:method}

In this work, given a dataset $\mathcal{D} = \{(x_i, y_i)\}_{i=1}^N\subset \mathcal{X} \times \mathcal{Y}$, we learn an (unrestricted) predictive function $f \in \mathcal{F}: \mathcal{X} \to \mathcal{Y}$ together with a transparent -- and usually simpler -- function $g \in \mathcal{G}:\mathcal{X}\to\mathcal{Y}$ defined over a functional class $\mathcal{G}$. We refer to functions $f$ and $g$ as the \textit{predictor} and the \textit{witness}, respectively, throughout the paper. 
Note that we need not make any assumptions on the functional class $\mathcal{F}$, instead allowing a flexible class of predictors. 
In contrast, the family of witnesses $\mathcal{G}$ is strictly constrained to be a \textit{transparent} functional set, such as the set of linear functions or decision trees. 
We assume to have a deviation function $d: \mathcal{Y} \times \mathcal{Y} \to \mathbb{R}_{\geq 0}$ such that $d(y, y') = 0 \iff y = y'$, which measures discrepancy between two elements in $\mathcal{Y}$ and can be used to optimize $f$ and $g$. 
To simplify the notation, we define $\mathcal{D}_x := \{x_i: (x_i, y_i) \in \mathcal{D}\}$. 
We introduce our game-theoretic framework in \xref{sec:gt_transp}, analyze it in \xref{seq:analysis}, and instantiate the framework with concrete models in \xref{sec:prob}.

\vspace{-1mm}
\subsection{Game-Theoretic Transparency}\label{sec:gt_transp}
\vspace{-1mm}

There are many ways to use a witness function $g \in \mathcal{G}$ to guide the predictor $f$ by means of discrepancy measures. 
However, since the witness functions can be weak such as linear functions, we cannot expect that a reasonable predictor would agree to it globally. Instead, we make a slight generalization to enforce this criterion only locally, over different sets of neighborhoods. 
To this end, we define \emph{local} transparency by measuring how close $f$ is to the family $\mathcal{G}$ over a local neighborhood $\mathcal{B}(x_i) \subset \mathcal{X}$ around an observed point $x_i$. 
One straightforward instantiation of such a neighborhood $\mathcal{B}_\epsilon({x_i})$ in temporal domain will be simply a local window of points $\{x_{i-\epsilon},\dots, x_{i+\epsilon}\}$.
Our resulting local discrepancy measure is
\begin{align}
\vspace{1mm}
\min_{g \in \mathcal{G}} \frac{1}{|\mathcal{B}(x_i)|} \sum_{x_j \in \mathcal{B}(x_i)} \!\!\! d(f(x_j), g(x_j)). 
\label{eq:local_deviation}
\vspace{1mm}
\end{align}
The summation can be replaced by an integral when a continuous neighborhood is used. 
The minimizing witness function, $\hat g_{x_i}$, is indexed by the point $x_i$ around which it is estimated; depending on the function $f$, the minimizing witness can change from one neighborhood to another. If we view the minimization problem game-theoretically, $\hat g_{x_i}$ is the \emph{best response strategy} of the local witness around $x_i$. 

The local discrepancy measure can be incorporated into an overall estimation criterion in many ways so as to guide the predictor towards the desired functional form. This guidance can be offered as a \emph{uniform} constraint with a permissible $\delta$-margin, as an additive \emph{symmetric} penalty, or defined \emph{asymmetrically} as a game theoretic penalty where the information sets for the predictor and the witness are no longer identical. We consider each of these in turn.


{\bf Uniform criterion.} 
A straightforward formulation is to confine $f$ to remain within a margin $\delta$ of the best fitting witness for every local neighborhood. Assume that a primal loss $\mathcal{L}(\cdot, \cdot)$ is given for a learning task. The criterion imposes the $\delta$-margin constraint uniformly as
\begin{align}
    & \sum_{(x_i, y_i) \in \mathcal{D}} \mathcal{L}(f(x_i), y_i) \label{eq:uniform:game}\\
    & s.t. \min_{g \in \mathcal{G}} \frac{1}{|\mathcal{B}(x_i)|} \sum_{x_j \in \mathcal{B} (x_i)} d(f(x_j), g(x_j)) \leq \delta, \forall x_i \in \mathcal{D}_x. \nonumber
\end{align}
We assume that the optimal $g$ with respect to each constraint may be efficiently found due to the simplicity of $\mathcal{G}$ and the regularity of $d(\cdot, \cdot)$. We also assume that the partial derivatives with respect to $f$, for fixed witnesses, can be computed straightforwardly under sufficiently regular $\mathcal{L}(\cdot, \cdot)$ in a Lagrangian form. In this case, we can solve for $f$, local witnesses, and the Lagrange multipliers using the mirror-prox algorithm~\cite{nemirovski2004prox}.

The hard constraints in the uniform criterion will lead to strict transparency guarantees. However, the effect may be undesirable in some cases where the observed data (hence the predictor) do not agree with the witness in all places. {The resulting loss of performance may be too severe.} As an alternative, we can enforce the agreement with local witnesses to be small in aggregate across neighborhoods. 

{\bf Symmetric game.} We define an additive, unconstrained, symmetric criterion to smoothly trade off between performance and transparency. 
The resulting objective is
\begin{align}
\sum_{(x_i, y_i) \in \mathcal{D}} \bigg[\mathcal{L}(f(x_i), y_i) +\hspace{1in}\mbox{}\nonumber \\
   \min_{g \in \mathcal{G}} \frac{\lambda}{|\mathcal{B}(x_i)|} \sum_{x_j \in \mathcal{B} (x_i)} d(f(x_j), g(x_j)) \bigg] \label{eq:coop_game}
\end{align}

To illustrate the above idea, we generate a synthetic dataset to show a \emph{neighborhood} in Figure~\ref{fig:data} with an unconstrained piecewise linear predictor $f\in \mathcal{F_{\text{piecewise linear}}}$ in Figure~\ref{fig:deep}. Clearly, $f$ does not agree with a linear witness within this neighborhood. However, when we solve for $f$ together with a linear witness $g_{x_i} \in \mathcal{G}_{\text{linear}}$ as in Figure~\ref{fig:linear}, the resulting function has a small residual deviation from $\mathcal{G}_\text{linear}$, more strongly adhering to the linear functional class while still closely tracking the observed data. Figure~\ref{fig:stump} shows the flexibility of our framework where a very different functional behavior can be induced by changing the functional class for the witness.

\begin{figure}	
	\begin{subfigure}{0.225\textwidth}
		\centering
		\includegraphics[width=1.\linewidth]{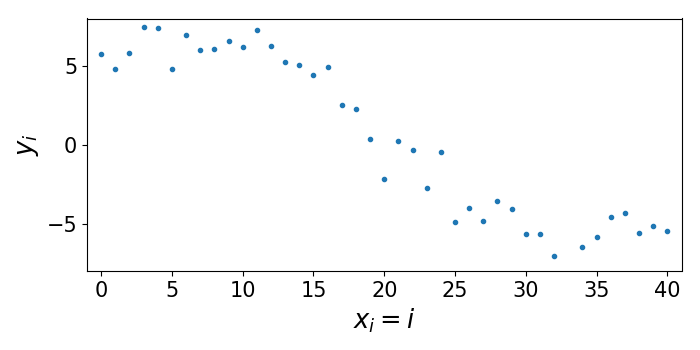}
		\caption{Neighborhood $\mathcal{B}(x_{20})$}\label{fig:data}
	\end{subfigure}
	~
	\begin{subfigure}{0.225\textwidth}
		\centering
		\includegraphics[width=1.\linewidth]{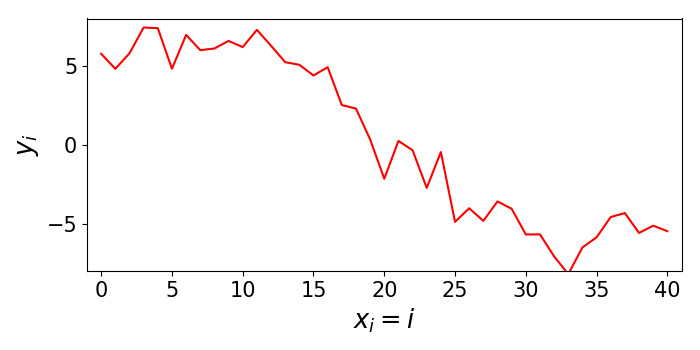}
		\caption{$f\in\mathcal{F}_\text{piecewise linear}$}\label{fig:deep}
	\end{subfigure}
    
	\begin{subfigure}{0.225\textwidth}
		\centering
		\includegraphics[width=1.\linewidth]{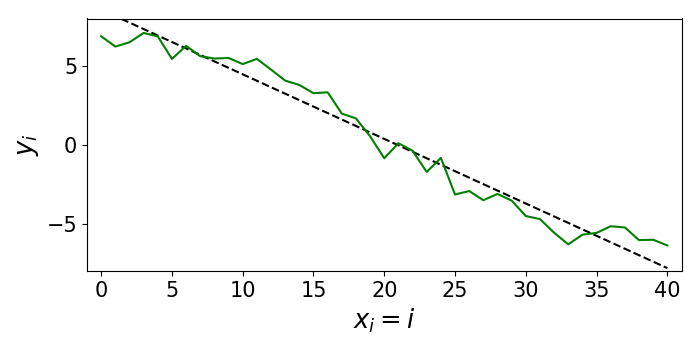}
		\caption{$g_{x_{i=1}} \in \mathcal{G}_{\text{linear}}$}\label{fig:linear}
	\end{subfigure}
	~
	\begin{subfigure}{0.225\textwidth}
		\centering
		\includegraphics[width=1.\linewidth]{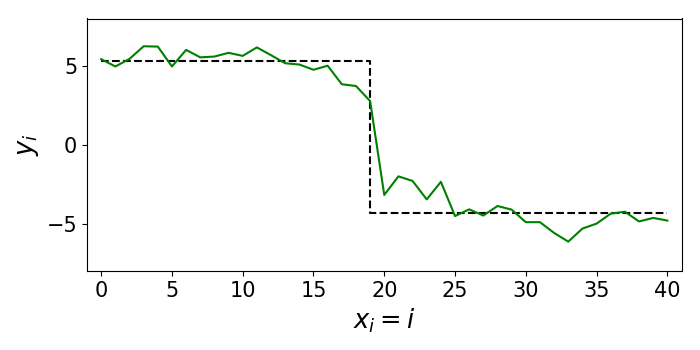}
		\caption{$g_{x_{i=1}} \in \mathcal{G}_{\text{decision stump}}$}\label{fig:stump}
	\end{subfigure}

	\caption{Examples of fitting a neighborhood $\mathcal{B}(x_{20})$~(\ref{fig:data}) with a piecewise linear predictor~(\ref{fig:deep}). Using different witness families (Figs.~\ref{fig:linear}\&\ref{fig:stump}, dashed lines) leads to predictors (solid green) with different behaviors, despite yielding the same error (MSE=1.026).
    }\label{fig:synthetic}
\end{figure}

{\bf Asymmetric game.}
Solving the symmetric criterion can be computationally inefficient since the predictor is guided by its deviation from each of the local witness on all points within each of the local neighborhoods. Moreover, the predictor value at any point $x_i$ is subject to potentially conflicting regularization terms across the neighborhoods, which is undesirable. The inner summation in Eq.~\eqref{eq:coop_game} may involve different sizes of neighborhoods $\mathcal{B}(x_i)$ (e.g., end-point boundary cases) and this makes it more challenging to parallelize the computation. 

We would like to impose even functional regularization at every $f(x_i)$ based on how much the value deviates from the witness associated with the local region $\mathcal{B}(x_i)$. This approach leads to an \emph{asymmetric} co-operative formulation, where the information sets for the predictor $f$ and local witnesses $g_{x_i}$ differ. Specifically, the local best-response witness $\hat g_{x_i}$ is chosen to minimize the local discrepancy as in Eq.~(\ref{eq:local_deviation}), and thus depends on $f$ values within the whole region; in contrast, the predictor $f$ only receives feedback in terms of the resulting deviation at $x_i$, only seeing $\hat g_{x_i}(x_i)$. From the point of view of the predictor $f$, the best response strategy is obtained by minimizing 
\begin{align}
& \sum_{(x_i, y_i) \in \mathcal{D}} \bigg[\mathcal{L}(f(x_i), y_i) + \lambda \; d(f(x_i), \hat g_{x_i}(x_i))\bigg]
\label{eq:asym_coop_game}
\end{align}
To train the proposed method, we perform alternating updates for $f(\cdot)$ and $\hat{g}_{x_i}(\cdot)$ on their respective criteria.

\subsection{Analysis}\label{seq:analysis}

We consider here the effectiveness of regularization in relation to the neighborhood size and establish fixed point equations for the predictor under the three estimation criteria. For simplicity, we assume $\mathcal{X} = \mathbb{R}^d$ and $\mathcal{Y} = \mathbb{R}$, but the results are generalizable to our examples in \xref{sec:prob}. All the proofs are in Appendix~\ref{sec:proof}.

\header{Neighborhood size.} The formulation involves a key trade-off between the size of the region where the function should be simple and the overall accuracy achieved by the predictor. When the neighborhood is too small, local witnesses become perfect, inducing no regularization on $f$. Thus the size of the region is a key parameter. A neighborhood size is sufficient if the witness class $\mathcal{G}$ cannot readily overfit $f$ values within the neighborhood. Formally, 
\begin{definition}
\label{def:effective}
We say that a neighborhood size $m$ is \textbf{effective} for $\mathcal{G}$ if for any $f\not\in \mathcal{G}$ we can find $\mathcal{B} \subset \mathcal{X}: |\mathcal{B}| = m$ s.t.
\begin{align}
    \min_{g \in \mathcal{G}} \frac{1}{m} \sum_{x \in \mathcal{B}} d(f(x), g(x)) > 0.
\end{align}
\vspace{-3mm}
\end{definition}
\vspace{-2mm}
A trivial example is when $\mathcal{G}$ is the constant class, a neighborhood size $m$ is effective if $m > 1$.
Note that the neighborhood $\mathcal{B}$ in the above definition can be any finite collection of points $\mathcal{B}(\cdot)$. For example, the points in the neighborhood induced by a temporal window $\{x_{i-\epsilon}, \dots, x_{i+\epsilon}\}$ need not remain in a small $\ell_p$-norm ball. 

For linear models and decision trees, we have 
\begin{itemize}[leftmargin=5.mm]
\vspace{-3.5mm}
\setlength\itemsep{0.2em}
  \item $d+1$ is the tight lower bound on the effective neighborhood size for the linear class. 
  \vspace{-1mm}
  \item $2^k+ 1$ is the tight lower bound on the effective neighborhood size for the decision tree class with depth bounded by $k$. 
\vspace{-2.5mm}
\end{itemize}
When the sample sizes within the neighborhoods fall below such bounds, regularization can still be useful if the witness class is not uniformly flexible or if the algorithm for finding the witness is limited (e.g., greedy algorithm for decision trees). 

\header{Equilibrium solutions.} 
The symmetric game constitutes a standard minimization problem, but the existence or uniqueness of equilibria under the asymmetric game are not obvious. Our main results in this section make the following assumptions.

\begin{itemize}[label={}, leftmargin=-0.5mm]
\vspace{-3.5mm}
\setlength\itemsep{-0.02em}
  \item \header{(A1)} the predictor $f$ is unconstrained.
  \vspace{-1mm}
  \item \header{(A2)} both the loss and deviation are squared errors.
  \vspace{-1mm}
  \item \header{(A3)} $|\mathcal{B}(x_i)| = m, \forall x_i \in \mathcal{D}_x$.
  \vspace{-1mm}
  \item \header{(A4)} $x_j \in \mathcal{B}(x_i) \implies x_i \in \mathcal{B}(x_j), \forall x_i, x_j \in \mathcal{D}_x$.
  \vspace{-1mm}
  \item \header{(A5)} $\cup_{x_i \in \mathcal{D}_x} \mathcal{B}(x_i) = \mathcal{D}_x$.
\vspace{-2.5mm}
\end{itemize}
We note that \textbf{(A3)} and \textbf{(A4)} are not technically necessary but simplify the presentation. We denote the predictor in the uniform criterion (Eq.~\eqref{eq:uniform:game}), the symmetric game (Eq.~\eqref{eq:coop_game}), and the asymmetric game (Eq.~\eqref{eq:asym_coop_game}) as $f_U$, $f_S$, and $f_A$, respectively. 
We use $X_i\in \mathbb{R}^{m\times d}$ to denote the neighborhood $\mathcal{B}(x_i) = \{x'_1,\dots,x'_m\}$ ($X_i = [x'_1,\dots,x'_m]^\top$), and $f(X_i)\in \mathbb{R}^m$ to denote the vector $[f(x'_1),\dots,f(x'_m)]^\top$. $X_j^\dagger$ denotes the pseudo-inverse of $X_j$. Then we have
\begin{theorem}
\label{lemma:linear_case}
If \emph{\textbf{(A1-5)}} hold and the witness is in the linear family, 
the optimal $f_S$ satisfies 
\vspace{-2mm}
\begin{equation}
f^*_S(x_i) = \frac{1}{1+\lambda} \bigg[ y_i + \frac{\lambda}{m} \bigl(\sum_{x_j \in \mathcal{B}(x_i)} X^\dagger_j f^*_S(X_j) \bigr)^\top x_i \bigg], \nonumber
\vspace{-2mm}
\end{equation}
and the optimal $f_A$, at every equilibrium, is the fixed point
\begin{equation}
\vspace{-2mm}
f^*_A(x_i) = \frac{1}{1+\lambda} \bigg[ y_i + \lambda  (X^\dagger_if^*_A(X_i) )^\top  x_i \bigg], \forall x_i \in \mathcal{D}_x. \nonumber 
\end{equation}
\end{theorem}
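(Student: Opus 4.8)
The plan is to derive both fixed-point equations as stationarity conditions of the respective objectives, after specializing to squared loss/deviation and the linear witness class. The one ingredient common to both cases is the closed-form best-response witness: for a neighborhood $\mathcal{B}(x_j)$ with design matrix $X_j$, minimizing $\sum_{x_t \in \mathcal{B}(x_j)} (f(x_t) - w^\top x_t)^2$ over $w$ is an ordinary least-squares problem whose solution is $\hat w_j = X_j^\dagger f(X_j)$, so $\hat g_{x_j}(x) = (X_j^\dagger f(X_j))^\top x$ and, on the neighborhood itself, $\hat g_{x_j}(X_j) = X_j X_j^\dagger f(X_j) = P_j f(X_j)$ is the orthogonal projection of $f(X_j)$ onto the column space of $X_j$. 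I would record that the residual $(I - P_j) f(X_j)$ is well defined even when $X_j$ is rank-deficient, which licenses using the pseudo-inverse throughout.

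For the symmetric game I would write the objective as $J_S(f) = \sum_k [(f(x_k)-y_k)^2 + \lambda V_k(f)]$ with $V_k(f) = \tfrac{1}{m}\min_{g}\sum_{x_t \in \mathcal{B}(x_k)}(f(x_t)-g(x_t))^2$, and compute $\partial J_S / \partial f(x_i)$. Here the envelope (Danskin) theorem is what makes the computation clean: because each $V_k$ is evaluated at its minimizing witness, the implicit dependence of $\hat g$ on $f$ contributes nothing, and $\partial V_k/\partial f(x_i) = \tfrac{2}{m}(f(x_i)-\hat g_{x_k}(x_i))$ whenever $x_i \in \mathcal{B}(x_k)$. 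The key bookkeeping step is to count which neighborhoods contain $x_i$: by the symmetry assumption \textbf{(A4)}, $x_i \in \mathcal{B}(x_k) \iff x_k \in \mathcal{B}(x_i)$, so the contributing indices are exactly $\{x_k \in \mathcal{B}(x_i)\}$, and by \textbf{(A3)} there are precisely $m$ of them. Summing the $m$ copies of $f(x_i)$ collapses the factor $\tfrac{\lambda}{m}\sum_{x_k \in \mathcal{B}(x_i)} f(x_i)$ to $\lambda f(x_i)$; setting the derivative to zero and dividing by $1+\lambda$ yields the stated equation, after rewriting $\sum_{x_j \in \mathcal{B}(x_i)} \hat g_{x_j}(x_i) = \bigl(\sum_{x_j \in \mathcal{B}(x_i)} X_j^\dagger f(X_j)\bigr)^\top x_i$.

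For the asymmetric game the structure differs precisely in the information set: from the predictor's viewpoint $\hat g_{x_i}(x_i)$ is a constant, and $f(x_i)$ is penalized only through its own neighborhood's witness rather than through every neighborhood that contains it. Thus I would differentiate $\sum_i[(f(x_i)-y_i)^2 + \lambda(f(x_i)-\hat g_{x_i}(x_i))^2]$ treating $\hat g_{x_i}(x_i)$ as fixed, obtaining the predictor best response $f(x_i) = \tfrac{1}{1+\lambda}[y_i + \lambda \hat g_{x_i}(x_i)]$. To cover \emph{every} equilibrium I will argue that at any equilibrium the two best-response conditions hold simultaneously --- the witnesses are least-squares fits to the current $f$, so that $\hat g_{x_i}(x_i) = (X_i^\dagger f(X_i))^\top x_i$, and $f$ best-responds to them --- and substituting the former into the latter gives the claimed fixed point for $f^*_A$.

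The main obstacle I anticipate is the symmetric-case bookkeeping together with its justification: one must argue the envelope theorem applies, so that the dependence of $\hat g$ on $f$ can be ignored in the gradient, and then correctly tally the multiplicity with which $f(x_i)$ enters the cross-neighborhood penalties, which is exactly where \textbf{(A3)} and \textbf{(A4)} are used and where the clean $\lambda f(x_i)$ term and the averaged witness predictions originate; assumption \textbf{(A5)} guarantees the resulting system of equations is closed over $\mathcal{D}_x$. The asymmetric case is comparatively mechanical once one is careful that its stationarity is a best-response condition for a single player, not the gradient of the same joint objective, which is the conceptual reason the two fixed points differ.
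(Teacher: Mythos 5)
Your proposal is correct and follows essentially the same route as the paper's proof: treat each $f(x_i)$ as a free variable, derive the per-point stationarity condition using \textbf{(A3)}--\textbf{(A4)} to identify the $m$ neighborhoods containing $x_i$ (the paper writes this via $\mathcal{B}^{-1}(x_i)=\mathcal{B}(x_i)$), compute the least-squares best-response witness $\hat g_{x_j}(x)=(X_j^\dagger f(X_j))^\top x$, and substitute it back into each game's predictor best response. Your explicit appeal to the envelope theorem and the projection/rank-deficiency remark merely make rigorous what the paper does implicitly (it handles degeneracy in a separate remark via the minimum-norm solution and Ridge regression), so the two arguments coincide in substance.
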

The equilibrium in the linear class is not unique when the witness is not fully determined in a neighborhood due to degeneracy. To avoid these cases, we can use Ridge regression to obtain a stable equilibrium (proved also in Appendix). 

A special case of Theorem~\ref{lemma:linear_case} is when $x_i = [1], \forall x_i \in \mathcal{D}_x$, which effectively yields the equilibrium result for the constant class; we found it particularly useful to understand the similarity between the two games in this scenario. Concretely, each $(X^\dagger_j f(X_j))^\top x_i$ becomes equivalent to $\frac{1}{m} \sum_{x_k \in \mathcal{B}(x_j)} f(x_k)$. As a result, the solution for both the symmetric and asymmetric game induce the optimal predictors as recursive convolutional averaging of neighboring points with the same decay rate ${\lambda}/({1+\lambda})$, while the convolutional kernel evolves twice as fast in the symmetric game than in the asymmetric game. 

Next, we show that the hard uniform constraint criterion yields a very different equilibrium. 
\begin{theorem}
If \emph{\textbf{(A1-5)}} hold and the witness is in the linear family, the optimal $f_U$ satisfies
\begin{align}
    f_U^*(x_i) = \left \{
  \begin{aligned}
    & \alpha(x_i, f_U^*), && \text{if}\ \alpha(x_i, f_U^*) > y_i, \\
    & \beta(x_i, f_U^*), && \text{if}\ \beta(x_i, f_U^*) < y_i, \\
    & y_i, && \text{otherwise,}
  \end{aligned} \right. \nonumber
\end{align}
for $x_i \in \mathcal{D}_x$, where
\begin{align*}
    & \alpha(x_i, f^*_U) = \max_{x_j \in \mathcal{B}(x_i)} \bigg[ (X_j^\dagger f^*_U(X_j))^\top x_i \\
    & - \sqrt{\delta m - \sum_{x_k \in \mathcal{B}(x_j)\backslash \{x_i\} }  (f^*_U(x_k) -  (X_j^\dagger f^*_U(X_j))^\top x_k )^2} \bigg];\\
    & \beta(x_i, f^*_U) = \min_{x_j \in \mathcal{B}(x_i)} \bigg[ (X_j^\dagger f^*_U(X_j))^\top x_i \\
    & + \sqrt{\delta m - \sum_{x_k \in \mathcal{B}(x_j)\backslash \{x_i\} }  (f^*_U(x_k) -  (X_j^\dagger f^*_U(X_j))^\top x_k )^2} \bigg].
\end{align*}
\vspace{-4.5mm}
\end{theorem}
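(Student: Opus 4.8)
The plan is to first replace each witness by its best response and recast the problem as a convex program in $f$. Under \textbf{(A2)} the inner minimization in Eq.~\eqref{eq:uniform:game} is an ordinary least-squares fit over the neighborhood $\mathcal{B}(x_j)=X_j$, whose coefficient vector is $X_j^\dagger f(X_j)$ and whose fitted values are $(X_j^\dagger f(X_j))^\top x_k$. Writing $P_j^\perp := I - X_j X_j^\dagger$ for the projection onto the orthogonal complement of the column space of $X_j$, the constraint attached to the neighborhood centered at $x_j$ is $\|P_j^\perp f(X_j)\|^2 \le \delta m$ (using \textbf{(A3)}). Each such constraint is a PSD quadratic, hence convex in $f$, and the loss $\sum_i (f(x_i)-y_i)^2$ is convex and separable across the coordinates $f(x_i)$; so Eq.~\eqref{eq:uniform:game} is a convex program whose global optimum is $f_U^*$.

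Next I would exploit separability via single-coordinate optimality. Because $f_U^*$ is globally optimal, it remains optimal when every other value $\{f(x_k)\}_{k\ne i}$ is frozen at $f_U^*(x_k)$ and each witness is frozen at its best response $g_j^*(\cdot):=(X_j^\dagger f_U^*(X_j))^\top(\cdot)$. The only constraints that involve $f(x_i)$ are those of neighborhoods $\mathcal{B}(x_j)$ with $x_i\in\mathcal{B}(x_j)$; by \textbf{(A4)} these are exactly the neighborhoods indexed by $x_j\in\mathcal{B}(x_i)$, and \textbf{(A5)} guarantees at least one exists. Isolating the $k=i$ summand in the $j$-th constraint turns it into $(f(x_i)-g_j^*(x_i))^2 \le \delta m - \sum_{x_k\in\mathcal{B}(x_j)\setminus\{x_i\}}(f_U^*(x_k)-g_j^*(x_k))^2 =: S_j$, i.e. $f(x_i)\in[\,g_j^*(x_i)-\sqrt{S_j},\; g_j^*(x_i)+\sqrt{S_j}\,]$. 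Feasibility of $f_U^*$ forces $S_j\ge 0$, so each interval is nonempty, and intersecting them over $x_j\in\mathcal{B}(x_i)$ (largest lower endpoint, smallest upper endpoint) yields precisely $[\alpha(x_i,f_U^*),\beta(x_i,f_U^*)]$ with $\alpha,\beta$ as stated.

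The coordinate subproblem is then to minimize $(f(x_i)-y_i)^2$ over $[\alpha,\beta]$, whose solution is the Euclidean projection of $y_i$ onto the interval: $y_i$ itself when $\alpha\le y_i\le\beta$, the endpoint $\alpha$ when $y_i<\alpha$, and the endpoint $\beta$ when $y_i>\beta$. Since feasibility gives $\alpha\le\beta$, at most one of the first two cases can fire, so the three branches are mutually exclusive and exhaustive, matching the claimed characterization exactly.

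The step deserving the most care is the reduction that lets the witnesses be held fixed at their best responses during the coordinate update: naively, perturbing $f(x_i)$ also refits every $g_j$, so the true constraint is a quadratic in $f(x_i)$ with leading coefficient $(P_j^\perp)_{ii}$ rather than the clean shifted ball above. The resolution is that freezing each witness at its best response can only shrink the feasible set (a fixed witness over-estimates the minimized deviation), yet the shrunken set still contains $f_U^*$, where the frozen witness is exactly the minimizer; hence optimality of $f_U^*$ over the true feasible set transfers to the smaller fixed-witness set, on which the constraints are the shifted balls producing $[\alpha,\beta]$. I would also verify the degenerate cases where $X_j$ does not determine the witness uniquely (the degeneracy flagged after Theorem~\ref{lemma:linear_case}); these do not affect the residuals $P_j^\perp f_U^*(X_j)$ and so leave the characterization intact.
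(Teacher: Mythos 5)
Your proposal is correct and takes essentially the same route as the paper's own proof: freeze each witness at its best response $(X_j^\dagger f(X_j))^\top(\cdot)$, use unconstrainedness of $f$ to optimize each $f(x_i)$ separately, isolate the $k=i$ summand in every constraint with $x_j \in \mathcal{B}(x_i)$ to get an interval, fold the bounds via $\max/\min$ into $[\alpha,\beta]$, and project $y_i$ onto it. Your final paragraph justifying the witness-freezing step (the frozen-witness feasible set shrinks yet still contains $f_U^*$, so optimality transfers) is in fact more careful than the paper, which only says it will ``temporarily treat each $g$ as a fixed function, and then replace it with its best response'' without addressing that perturbing $f(x_i)$ refits the witnesses.
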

A noticeable difference from the games is that, under uniform criterion, the optimal predictor $f^*_U(x_i)$ may faithfully output the actual label $y_i$ if the functional constraint is satisfied, while the functional constraints are translated into a ``convolutional'' operator in the games. 

\header{Efficient computation.} We also analyze ways of accelerating the computation required for solving the symmetric game. An equivalent criterion is given by
\begin{lemma}\label{lemma:main_adjusted_game}
If $d(\cdot, \cdot)$ is squared error, $\mathcal{L}(\cdot, \cdot)$ is differentiable, $f$ is sub-differentiable, and \emph{$\textbf{A(4-5)}$} hold, then 
\vspace{-1.5mm}
\begin{equation}
\sum_{(x_i, y_i) \in \mathcal{D}}\! \mathcal{L}(f(x_i), y_i) + \frac{\lambda}{\bar{N}_i}\bigg[ \bar{N}_i f(x_i) - \!\!\!\! \sum_{x_t\in \mathcal{B}(x_i)} \! \frac{\hat{g}_{x_t}(x_i)}{|\mathcal{B}(x_t)|} \bigg]^2\!\!\!, \! \nonumber
\end{equation}
where $\bar{N}_i := \sum_{x_t \in \mathcal{B}(x_i)}\! \frac{1}{|\mathcal{B}(x_t)|}$, induces the same equilibrium as the symmetric game. 
\end{lemma}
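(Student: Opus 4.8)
The plan is to show that the symmetric objective and the adjusted objective, both regarded as functions of the predictor $f$ with the witnesses held at their best responses, have identical equilibrium conditions. Since $d$ is squared error, each best-response witness $\hat g_{x_t}$ is the least-squares fit of $\{f(x_k)\}_{x_k \in \mathcal{B}(x_t)}$ over $\mathcal{G}$, and these witnesses are defined in exactly the same way in both formulations, so the witness half of any equilibrium is automatically shared. Moreover, both objectives carry the same data-fit terms $\mathcal{L}(f(x_i), y_i)$ and depend on $f$ only through the finite collection of values $\{f(x_i)\}_{x_i \in \mathcal{D}_x}$. Hence it suffices to verify that, for each $x_i$, the partial derivative of each objective with respect to the value $f(x_i)$ agrees (with the witnesses fixed at their best responses); the chain rule propagating these to the parameters of $f$ is common to both, and that is where sub-differentiability of $f$ enters.

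First I would differentiate the symmetric regularizer. The value $f(x_i)$ enters the penalty of every neighborhood $\mathcal{B}(x_t)$ that contains $x_i$, and by \textbf{(A4)} these are precisely the neighborhoods indexed by $x_t \in \mathcal{B}(x_i)$. Because $\hat g_{x_t}$ minimizes the inner squared deviation, Danskin's (envelope) theorem lets me differentiate while treating $\hat g_{x_t}$ as fixed, so each such neighborhood contributes $\frac{2\lambda}{|\mathcal{B}(x_t)|}(f(x_i) - \hat g_{x_t}(x_i))$. Summing over $x_t \in \mathcal{B}(x_i)$ and collecting the coefficient of $f(x_i)$ gives
\begin{equation}
\partial_{f(x_i)}\big(\text{reg}_{\text{sym}}\big) = 2\lambda\bigg[\bar N_i\, f(x_i) - \sum_{x_t \in \mathcal{B}(x_i)} \frac{\hat g_{x_t}(x_i)}{|\mathcal{B}(x_t)|}\bigg], \nonumber
\end{equation}
where $\bar N_i = \sum_{x_t \in \mathcal{B}(x_i)} 1/|\mathcal{B}(x_t)|$. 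Here \textbf{(A5)} together with \textbf{(A4)} guarantees that $\mathcal{B}(x_i)$ is nonempty, so $\bar N_i > 0$ and the adjusted objective is well defined.

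Next I would differentiate the adjusted regularizer with the same witnesses held fixed. Only the $i$-th summand contains $f(x_i)$; applying the chain rule to the square, the inner factor $\bar N_i$ produced by differentiating $\bar N_i f(x_i)$ exactly cancels the prefactor $1/\bar N_i$, so the derivative is $2\lambda\big[\bar N_i f(x_i) - \sum_{x_t \in \mathcal{B}(x_i)} \hat g_{x_t}(x_i)/|\mathcal{B}(x_t)|\big]$, identical to the symmetric expression. Adding the common term $\partial_{f(x_i)}\mathcal{L}(f(x_i), y_i)$, the stationarity condition for $f$ coincides for both objectives at any fixed witness configuration. Since the best-response witness conditions are also identical, the two criteria share exactly the same equilibria, which is the claim.

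The main obstacle is the justification for holding the witnesses fixed while differentiating. For the symmetric objective this is a clean invocation of the envelope theorem, because each $\hat g_{x_t}$ is an explicit $\argmin$ of the term it sits in; for the adjusted objective the witnesses no longer appear as an $\argmin$, so I must instead read ``equilibrium'' in the game-theoretic best-response sense -- $f$ is a best response to the fixed $\hat g_{x_t}$, which are in turn best responses to $f$ -- and verify that the two best-response maps have a common fixed point. The remaining work is purely combinatorial: using the membership symmetry \textbf{(A4)} to rewrite ``all neighborhoods containing $x_i$'' as ``$x_t \in \mathcal{B}(x_i)$'', and tracking the weights $1/|\mathcal{B}(x_t)|$ so that the coefficient of $f(x_i)$ assembles into exactly $\bar N_i$.
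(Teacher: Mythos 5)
Your proposal is correct and follows essentially the same route as the paper's proof: both fix the witnesses at their (identical) best responses, compare the stationarity conditions for $f$ term by term, and use \textbf{(A4)}--\textbf{(A5)} to reindex the neighborhood contributions so the coefficient of $f(x_i)$ assembles into $\bar{N}_i$ and the $1/\bar{N}_i$ prefactor cancels. Your per-coordinate collection of ``all neighborhoods containing $x_i$'' is just the paper's explicit double-sum swap in disguise, and your explicit remark that ``equilibrium'' must be read in the best-response sense (rather than via Danskin) is a careful articulation of what the paper does implicitly.
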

The result is useful when training $f$ on GPU and $\hat{g}_{x_i}$ is solved analytically on CPU. 
Compared to a for-loop to handle different neighborhood sizes for Eq.~(\ref{eq:coop_game}) on the GPU, computing a summarized feedback as in Lemma~\ref{lemma:main_adjusted_game} on CPU is more efficient (and easier to implement). 


\header{Discussion} We investigated here discrete neighborhoods and they are suitable also for structured data as in the experiments. The method itself can be generalized to continuous neighborhoods with an additional difficulty: the exact computation and minimization of functional deviation between the predictor and the witness in such neighborhood is in general intractable. We may apply results from learning theory (e.g.,~\cite{shamir2015sample}) to bound the (generalization) gap between the deviation computed by finite samples from the continuous neighborhood and the actual deviation under a uniform probability measure. 

\section{Examples}
\label{sec:prob}

\subsection{Conditional Sequence Generation}

The basic idea of co-operative modeling extends naturally to conditional sequence generation over longer periods. 
Broadly, the mechanism allows us to inspect the temporal progression of sequences on a longer term basis. 

Given an observation sequence $x_{1},\dots,$ $x_{t}\in\mathbb{R}^c$, the goal is to estimate probability $p(x_{t+1:T} | x_{1:t})$ over future events $x_{t+1},\dots,$ $x_{T}\in \mathbb{R}^c$, typically done via maximum likelihood. 
For brevity, we use $x_{1:i}$ to denote $x_1,\dots,x_i$.
We model the conditional distribution of $x_{i+1}$ given ${x}_{1:i}$ as a multivariate Gaussian distribution with mean $\mu(x_{1:i})$ and covariance $\Sigma(x_{1:i})$, both parametrized as recurrent neural networks.
Each local witness model $g_{{x}_{1:i}}(\cdot)$ is estimated based on the neighborhood $\mathcal{B}(x_{1:i}):=\{{x}_{1:i-\epsilon},\dots,{x}_{1:i+\epsilon}\}$ with respect to the mean function $\mu(\cdot)$.   
A natural choice would be a $K$-order Markov autoregressive (AR) model with an $\ell_2$ deviation loss as:
\begin{equation}
\vspace{-1mm}
    \min_{\theta} \sum_{x_{1:t} \in \mathcal{B}(x_{1:i})} \| \sum_{k=0}^{K-1} \theta_{k+1} \cdot x_{t-k} + \theta_0 - \mu(x_{1:t}) \|^2_2, \nonumber
\end{equation}
where $\theta_k \in \mathbb{R}^{c \times c}, \forall k > 0$ and $\theta_0 \in \mathbb{R}^{c}$.
The AR model admits an analytical solution similar to linear regression. 

\subsection{Chemical Property Prediction}
\label{sec:chemprop}
The models discussed in \xref{sec:method} can be instantiated on highly-structured data, such as molecules, too. These are usually represented as a graph $\mathcal{M}=(\mathcal{V}, \mathcal{E})$ whose nodes encode the atom types and edges encode the chemical bonds.
Such representation enables the usage of recent graph convolutional networks (GCNs)~\citep{dai2016discriminative,lei2017deriving} as the predictor $f$. As it is hard to realize a simple explanation on the raw graph representation, we exploit an alternative data representation for the witness model; we leverage depth-bounded decision trees that take as input Morgan fingerprints~\cite{rogers2010extended} $x(\mathcal{M})$, which are vector representations for the binary existence of a chemical substructures in a molecule (e.g., the nodes in Fig.~\ref{fig:teaser}).

The neighborhood $\mathcal{B}(\mathcal{M})$ includes molecules $\{\mathcal{M}'\}$ with Tanimoto similarity greater than $0.6$, automatically constructed through matching molecular pair analysis~\citep{griffen2011matched}.
Here we use a multi-label binary classification task as an example, and adopt a cross-entropy loss for each label axis for simplicity. At each neighborhood $\mathcal{B}(\mathcal{M})$, we construct a witness decision tree $g$ that minimizes the total variation (TV) from the predictor as
\vspace{-1.5mm}
\begin{equation}
    \min_{g\in \mathcal{G}_{tree}} \!\! \frac{1}{|\mathcal{B}(\mathcal{M})|} \sum_{\mathcal{M}' \in \mathcal{B}(\mathcal{M}) } \!\!\! \sum_{i=1}^{\text{dim}(\mathcal{Y})} \!\! |f(\mathcal{M}')_i - g(x(\mathcal{M}'))_i|. \!\! \label{eq:dctree} 
\end{equation}
We note that Eq.~(\ref{eq:dctree}) is an upper bound and efficient alternative to fitting a tree for each label axis independently. 

\subsection{Molecule Representation Learning}
\label{sec:molrepr}

Our approach can be further applied to learn transparent latent graph representations by variational autoencoders (VAEs)~\citep{kingma2013auto,jin2018junction}. Concretely, given a molecular graph $\mathcal{M}=(\mathcal{V}, \mathcal{E})$, the VAE encoder $q$ outputs the approximated posterior $z_\mathcal{M} \sim \mathcal{N}(\mu_\mathcal{M}, \Sigma_\mathcal{M})$ over the latent space, where $z_\mathcal{M}$ is the continuous representation of molecule $\mathcal{M}$. 
Following common practice, $\Sigma_\mathcal{M}$ is restricted to be diagonal. 
The VAE decoder then reconstructs the molecule $\mathcal{M}$ from its probabilistic encoding $z_\mathcal{M}$. 
Our goal here is to guide the behavior of the neural encoder $q$ such that the derivation of (probabilistic) $z_\mathcal{M}$ can be locally explained by a decision tree.

We adopt the same setting for the witness function and neighborhoods as in \xref{sec:chemprop}, except that the local decision tree $g$ now outputs a joint normal distribution with parameters $[\widehat{\mu}_\mathcal{M}, \widehat{\Sigma}_\mathcal{M}]$.
To train the encoder, we extend the original VAE objective $\mathcal{L}^{\mathrm{VAE}}$ with a local deviation loss $\mathcal{L}^{\mathcal{G}_\text{tree}}$ defined on the KL divergence between the VAE posterior $q(\mathcal{M})=\mathcal{N}(\mu_\mathcal{M}, \Sigma_\mathcal{M})$ and witness posterior $g(x(\mathcal{M})) = \mathcal{N}(\widehat{\mu}_\mathcal{M}, \widehat{\Sigma}_\mathcal{M})$ at each neighborhood as
\vspace{-1mm}
\begin{equation}
\mathcal{L}^{\mathcal{G}_\text{tree}} \! := \frac{1}{|\mathcal{D}|}\sum_{\mathcal{M} \in \mathcal{D}} \min_{g \in \mathcal{G}_\text{tree}} \sum_{\mathcal{M}' \in \mathcal{B}(\mathcal{M}) } \frac{\mathrm{KL} \! \left(g(x(\mathcal{M}')) || q(\mathcal{M}')\right)}{|\mathcal{B}(\mathcal{M})|} \nonumber
\vspace{-1mm}
\end{equation}
The VAE is trained to maximize $\mathcal{L}^{\mathrm{VAE}} + \lambda \cdot \mathcal{L}^{\mathcal{G}_\text{tree}}$. For ease of implementation, we asymmetrically estimate each decision tree $g$ with mean squared error between the vectors $[\mu_\mathcal{M}, \Sigma_\mathcal{M}]$ and $[\widehat{\mu}_\mathcal{M}, \widehat{\Sigma}_\mathcal{M}]$.

\vspace{-1mm}

\vspace{-1mm}
\section{Experiments}
\label{sec:experiment}
\vspace{-1mm}

We conduct experiments on chemical and time-series datasets. Due to the lack of existing works for explaining structured data, we adopt an ablation setting -- comparing our approach (\textsc{Game}) versus an unregularized model (\textsc{Deep}) -- and focus on measuring the transparency.
We use subscripts to denote specific versions of the \textsc{Game} models.
Note that we only fit the local witnesses to the \textsc{Deep} model during testing for evaluation. 
Unless otherwise noted, the reported results are based on the testing set. 

\vspace{-0.5mm}
\vspace{-1mm}
\subsection{Molecule Property Prediction}
\vspace{-1mm}
\begin{table}[t]
\vspace{-3mm}
\setlength{\tabcolsep}{3.9pt}
\caption{Performance on the Tox-21 dataset. $\text{AUC}_{\mathcal{D}}(\hat{g}_{\mathcal{M}}, f)$ and $\text{AUC}_{\mathcal{B}}(\hat{g}_{\mathcal{M}}, f)$ generalize the AUC score to use $f$ values as labels, computed on the testing data and their neighborhoods, respectively.}
\vspace{3pt}
\label{tab:tox_perf}
\centering
\begin{tabular}{ c | c c c c  c c c}  
\toprule
\small Aspect & \small Measure    & \small $\textsc{Game}_\text{unif}$                         & \small $\textsc{Game}_\text{sym}$  & \small \textsc{Deep}\\
\midrule
{\small Performance}            & \small AUC$(f, y)$                                             & \small 0.744 & \small \textbf{0.826} & \small 0.815\\
\tiny (the higher the better)   & \small AUC$(\hat{g}_{\mathcal{M}}, y)$                         & \small 0.742 & \small \textbf{0.824} & \small 0.818\\
\midrule
{\small Transparency}           & \small $\text{AUC}_{\mathcal{B}}(\hat{g}_{\mathcal{M}}, f)$    & \small \textbf{0.764} & \small 0.759 & \small 0.735\\
\tiny (the higher the better)   & \small $\text{AUC}_{\mathcal{D}}(\hat{g}_{\mathcal{M}}, f)$    & \small 0.959 & \small \textbf{0.967} & \small 0.922\\
\bottomrule
\end{tabular}
\vspace{-3mm}
\end{table}	

We conduct experiments on molecular toxicity prediction on the Tox21 dataset from MoleculeNet benchmark~\citep{wu2018moleculenet}, which contains 12 binary labels and $7,831$ molecules. The labels are very unbalanced; the fraction of the positive label is between $16.15\%$ and $3.51\%$ among the 12 labels.
We use GCN as the predictor and decision trees as the witnesses as in \xref{sec:chemprop}. 
The neighborhood sizes $m$ of about $60\%$ of the molecules are larger than $2$, whose median and maximum are $59$ and $300$, respectively.
Since each neighborhood has a different size $m$, we set the maximum tree depth as $\max\{\lceil \log_2(m) \rceil - 1, 1\}$ for each neighborhood, which ensures that the corresponding size $m$ is effective for $m > 2$ (see Definition~\ref{def:effective}). 
More details are in Appendix~\ref{appendix:molecule}.
 
\header{Evaluation Measures:} 
For all the measures, the results are averaged across the label axes. 
\vspace{-1mm}

(1) Performance: For the predictor, we compare its predictions with respect to the labels in AUC, denoted as AUC$(f, y)$. As each local witness $\hat{g}_{\mathcal{M}}(x(\mathcal{M}))$ also realizes a function of $\mathcal{M}$, it is also evaluated against the labels in AUC, denoted as AUC$(\hat{g}_{\mathcal{M}}, y)$.
\vspace{-1mm}

(2) Transparency: As labels are unavailable for testing data in practice, it is more realistic to measure the similarity between the predictor and the local witnesses to understand the validity of the explanations derived from the decision trees $\mathcal{G}$. To this end\footnote{Since the predictor probability can be scaled arbitrarily to minimize the TV from decision trees without affecting performance, using TV to measure transparency as used in training is not ideal.}, we generalize the AUC criterion for continuous labels for $N$ references $y$ and predictions $y'$ as
\vspace{-1mm}
\begin{equation}
\sum_{i=1}^{N} \sum_{j=1}^{N} \mathbb{I}(y_i > y_j) \mathbb{I}(y'_i > y'_j) / \sum_{i=1}^{N} \sum_{j=1}^{N} \mathbb{I}(y_i > y_j). \nonumber
\vspace{-1mm}
\end{equation}
The proposed score has the same pairwise interpretation as AUC, recovers AUC when $y$ is binary, and is normalized to $[0, 1]$. 
Locally, we measure the criterion for the local witnesses with respect to the predictor in each testing neighborhood as the local deviation, where the average result is denoted as $\text{AUC}_{\mathcal{B}}(\hat{g}_{\mathcal{M}}, f)$.
Globally, the criterion is also validated among the testing data, denoted as $\text{AUC}_{\mathcal{D}}(\hat{g}_{\mathcal{M}}, f)$. 

The results with the uniform and symmetric criteria are shown in Table~\ref{tab:tox_perf}. A baseline vanilla decision tree, with depth tuned between $2$ and $30$, yields 0.617 in $\text{AUC}(f, y)$.
Compared to $\textsc{Game}_\text{sym}$, the local deviation in $\textsc{Game}_\text{unif}$ is marginally improved due to the strict constraint at the cost of severe performance loss.
We investigate the behaviors in training neighborhoods and find that $\textsc{Game}_\text{sym}$ exhibits a tiny fraction of high deviation losses, allowing the model to behave more flexibly than the strictly constrained $\textsc{Game}_\text{unif}$ (see Figure~\ref{fig:train:tv} in Appendix~\ref{appendix:molecule}).
In terms of performance, our $\textsc{Game}_\text{sym}$ model is superior to the \textsc{Deep} model in both the predictor and local witnesses. 
When comparing the witnesses to the predictor, locally and globally, the \textsc{Game} models significantly improve the transparency from the \textsc{Deep} model. 
The local deviation should be interpreted relatively since the tree depth inherently prevents local overfitting. 

We visualize the resulting witness trees in Figure~\ref{fig:teaser} under the same transparency constraint: for a local neighborhood, we grow the witness tree for the \textsc{Deep} model until the local transparency in $\text{AUC}_{\mathcal{B}}$ is comparable to the $\textsc{Game}_\text{sym}$ model.
For explaining the same molecule, the tree for the \textsc{Deep} model is deeper and extremely unbalanced. Since a Morgan fingerprint encodes the existence of a substructure of molecule graphs, an unbalanced tree focusing on the left branch (non-existence of a substructure) does not capture much generality. Hence, the explanation of the \textsc{Deep} model does not provide as much insight as our $\textsc{Game}_\text{sym}$ model.

Here we do an analysis on the tree depth constraint for the witness model, as a shallower tree is easier to interpret, but more challenging to establish transparency due to the restricted complexity. 
To this end, we revise the depth constraint to $\max\{\lceil \log_2(m) \rceil - 1 + \Delta, 1\}$ during training and testing, and vary $\Delta \in \{-3,\dots,0\}$. 
All the resulting \textsc{Game} models outperform the \textsc{Deep} models in AUC$(f, y)$, and we report the transparency score in terms of $\text{AUC}_\mathcal{D}(\hat{g}_{\mathcal{M}}, f)$ in Table~\ref{tab:tox_anal}. Even when $\Delta = -3$, the witness trees in our \textsc{Game} model still represent the predictor more faithfully than those in the \textsc{Deep} model with $\Delta = 0$.

\begin{table}[t]
\vspace{-2.5mm}
\setlength{\tabcolsep}{3.9pt}
\caption{$\text{AUC}_\mathcal{D}(\hat{g}_{\mathcal{M}}, f)$ score on different $\Delta$ in the Tox-21 dataset (lower $\Delta$ implies shallower trees).}
\vspace{3pt}
\label{tab:tox_anal}
\centering
\begin{tabular}{ c | c c c c  c c c}  
\toprule
\small Model & \small $\Delta=0$ & \small $\Delta=-1$ & \small $\Delta=-2$ & \small $\Delta=-3$\\
\midrule
\small \textsc{Game} & \small 0.967 & \small 0.967 & \small 0.964 & \small 0.958\\
\small \textsc{Deep} & \small 0.922 & \small 0.916 & \small 0.915 & \small 0.914\\
\bottomrule
\end{tabular}
\vspace{-5mm}
\end{table}


\vspace{-0.5mm}
\subsection{Physical Component Modeling}


\begin{figure*}[t]
\centering
  \includegraphics[width=.9\linewidth]{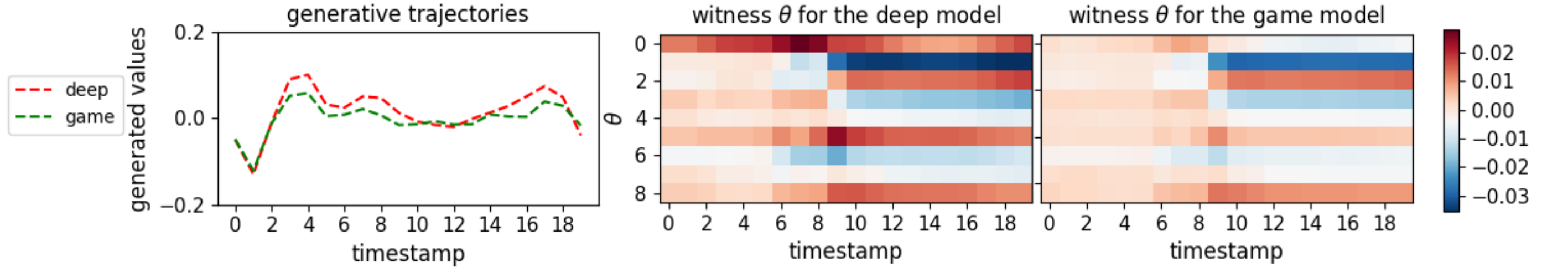}
  \vspace{-5mm}
  \caption{Visualizing the linear witnesses (middle and right plots) on the first channel (left plot) along the autoregressive generative trajectory ($x$-axis) on the bearing dataset. The $y$-axis of the parameters from $0$ to $8$ denotes the bias $(\theta_0)_1$ and weights $(\theta_1)_{1, 1:4}, (\theta_2)_{1, 1:4}$.}
  \vspace{-4mm}
  \label{fig:visual}
\end{figure*}

\begin{table}[t]
\setlength{\tabcolsep}{3.2pt}
\caption{Performance of the symmetric and asymmetric setting of the \textsc{Game} model with $\epsilon=9$.}
\vspace{3pt}
\label{tab:lbda}
\centering
\begin{tabular}{c  c || c  c  c  c  c | c }  
\toprule
\small $(\times 10^{-2})$ & \bf \small {$\lambda$} & \bf\small $0$ & \bf\small $0.1$ & \small \bf $1$ & \small \bf $10$ & \small \bf $100$ & AR\\
\midrule
\multirow{2}{*}{\small $\textsc{Game}_\text{asym}$} & \small Error & \small 8.136 & \small 8.057 & \small 8.309 & \small 9.284 & \small 9.794 & \small 9.832\\
 & \small Dev.        						        & \small 4.197 & \small 4.178 & \small 3.431 & \small 1.127 & \small 0.186 & \small0.000\\
 & TV                                               & \small 7.341 & \small 7.197 & \small 5.706 & \small 1.177 & \small 0.144 & \small 0.000\\
\midrule
\multirow{2}{*}{\small $\textsc{Game}_\text{sym}$} & \small Error  & \small 8.136 & \small 8.089 & \small 8.315 & \small 9.314 & \small 9.807 & \small 9.832\\
 & \small Dev.        						        & \small 4.197 & \small 4.169 & \small 3.426 & \small 1.116 & \small 0.182 & \small 0.000\\
 & TV                                               & \small 7.341 & \small 7.292 & \small 5.621 & \small 1.068 & \small 0.132 & \small 0.000\\
\bottomrule
\end{tabular}
\vspace{-4.5mm}
\end{table}	

We next validate our approach on a physical component modeling task with the bearing dataset from NASA~\citep{bearing_dataset}, which records 4-channel acceleration data on 4 co-located bearings.
We divide the sequence into disjoint subsequences, resulting in $200,736$ subsequences.
Since the dataset exhibits high frequency periods of 5 points and low frequency periods of 20 points, we use the first $80$ points in an sequence to forecast the next $20$. 
We parametrize $\mu(\cdot)$ and $\Lambda(\cdot)$ jointly by stacking $1$ layer of CNN, LSTM, and $2$ fully connected layers.
We set the neighborhood radius $\epsilon$ to $9$ such that the witnesses are fit with completely different data for the beginning and the end of the sequence. The Markov order $K$ is set to $2$ to ensure the effectiveness of the neighborhood sizes. 
More details are in Appendix~\ref{app:time_series}.

Evaluation involves three different types of errors: 1) `error' is the root mean squared error (RMSE) between greedy autoregressive generation and the ground truth, 2) `deviation' is RMSE between the predictor $\mu(x_{1:i})$ and the witness $\hat{g}_{x_{1:i}}(x_{1:i})$, and 3) `TV' is the average total variation of witness $\hat{g}_{x_{1:i}}$ parameters $[{\theta}, {\theta}_0]$ between every two consecutive time points. 
Since the deviation and error are both computed on the same space in RMSE, the two measures are readily comparable.
For testing, the witnesses are estimated based on the autoregressive generative trajectories. 

We present the results in Table~\ref{tab:lbda} to study the impact of the game coefficient $\lambda$ and the symmetry of the games. 
The trends in the measures are quite monotonic on $\lambda$: with an increasing $\lambda$, the model gradually operates toward the AR family with lower deviation and TV but higher error. 
When $\lambda = 0.1$, the \textsc{Game} models are more accurate than the \textsc{Deep} model ($\lambda = 0$) due to the regularization effect. 
Given the same hyper-parameters, marginally lower deviation in the symmetric game than in the asymmetric game confirms our analysis about the similarity between the two. 
In practice, the asymmetric game is more efficient and substantially easier to implement than the symmetric game. 
Indeed, the training time is $20.6$ sequences/second for the asymmetric game, and $14.6$ sequences/second for the symmetric game. 
If we use the formula in Lemma~\ref{lemma:main_adjusted_game}, the symmetric game can be accelerated to $20.4$ sequences/second, but the formula does not generalize to other deviation losses.

We visualize the witnesses with their parameters $[\theta_0, \theta]$ along the autoregressive generative trajectories in Figure~\ref{fig:visual}. 
The stable functional patterns of the \textsc{Game} model as reflected by $\theta$, before and after the $9^\text{th}$ point, highlight not only close local alignments of the predictor and the AR family (being constant vectors across columns) but also flexible variation of functional properties on the predictor across regions.
In contrast, the \textsc{Deep} model yields unstable linear coefficients, and relies more on offsets/biases $\theta_0$ than the \textsc{Game} model, while the linear weights are more useful for grounding the coordinate relevance for interpretability.
Finally, we remark that despite the uninterpretable nature of temporal signals, the functional pattern reflected by the linear weights as shown here yields a simple medium to understand its behavior.
Due to space limitation, the additional analysis and visualization are included in Appendix~\ref{app:time_series}.



\vspace{-0.5mm}
\vspace{-1mm}
\subsection{Molecule Representation Learning}
\vspace{-1mm}


Finally, we validate our approach on learning representations for molecules with VAEs, where we use the junction tree VAE~\citep{jin2018junction} as an example.
Here the encoders of VAEs, with and without the guidance of local decision trees as in \xref{sec:molrepr}, are denoted as \textsc{Deep} and \textsc{Game}, respectively.
The models are trained on the ZINC dataset~\citep{sterling2015zinc} containing 1.5M molecules, and evaluated on a test set with 20K molecules. We measure the performance in terms of the evidence lower bound (ELBO) over the test set. Here we consider two scenarios: the ELBO using the raw latent representations from the original neural encoder, and using the interpreted latent representations generated by locally fitted decision trees. 
The average deviation loss in KL divergence $\mathcal{L}^{\mathcal{G}_\text{tree}}$, defined in \xref{sec:molrepr}, over the testing neighborhoods is also evaluated. 

The results are shown in Table~\ref{tab:molvae}. Our \textsc{Game} model performs consistently better under all the metrics.
Figure~\ref{fig:molvae_tree} shows an example of how our decision tree explains the local neighborhood of a molecule. We found most of the substructures selected by the decision tree occur in the side chains outside of Bemis-Murcko scaffold~\citep{bemis1996properties}. This shows the variation in the latent representation mostly reflects the local changes in the molecules, which is expected since changes in the scaffold typically lead to global changes such as chemical property changes.

\begin{figure}[t]
    \centering
    \includegraphics[width=0.44\textwidth]{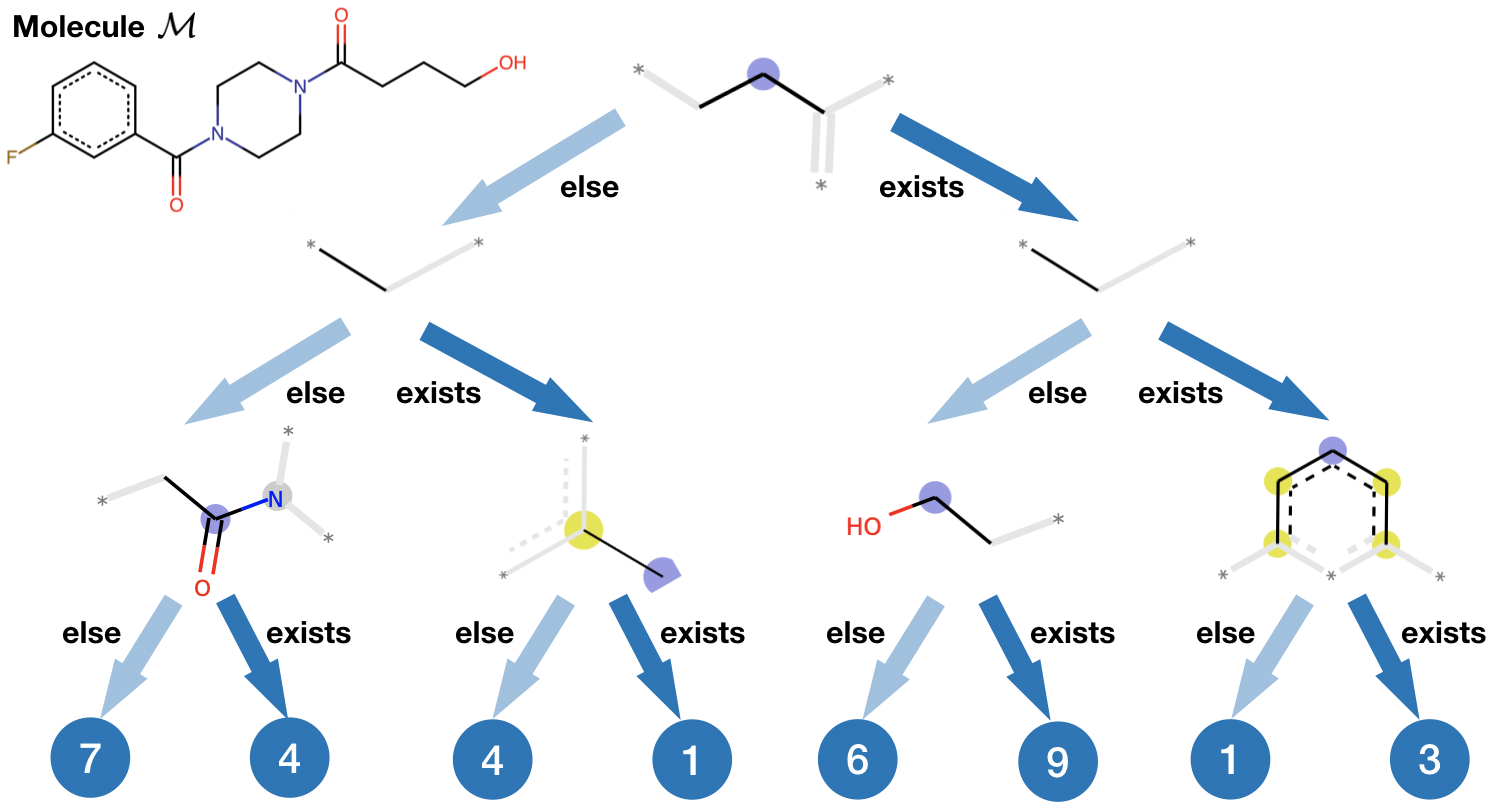}
    \vspace{-3mm}
    \caption{
    The local decision tree explains the latent representation for a molecule (upper left) by identifying locally discriminative chemical substructures. The leaf nodes are annotated with their sizes (number of molecules belonging to that cluster).}
    \label{fig:molvae_tree}
    \vspace{-4mm}
\end{figure}

\begin{table}[t]
\setlength{\tabcolsep}{3.9pt}
\vspace{-3pt}
\caption{The performance in ELBO for the raw neural encoders and locally adapted decision trees. The deviation is defined in \xref{sec:molrepr}.}
\vspace{3pt}
\label{tab:molvae}
\centering
\begin{tabular}{ c | c c c }  
\toprule
\small Model & \small ELBO\tiny{neural encoder} & \small ELBO\tiny{decision tree} & \small deviation ($\mathcal{L}^{\mathcal{G}_\text{tree}}$) \\
\midrule
\small \textsc{Deep} & \small -21.6 & \small -25.4 & \small 4.64 \\
\small \textsc{Game} & \small \textbf{-21.5} & \small \textbf{-25.1} & \small \textbf{3.98} \\
\bottomrule
\end{tabular}
\vspace{-4mm}
\end{table}	
\vspace{-0.5mm}

\vspace{-2mm}
\section{Conclusion}
\vspace{-1mm}
\label{sec:conclusion}

We propose a novel game-theoretic approach to learning transparent models on structured data. The game articulates how the predictor model's fitting can be traded off against agreeing locally with a transparent witness. 
This work opens up many avenues for future work, from theoretical analysis of the games to a multi-player setting.


\section*{Acknowledgement}
The work was funded in part by a grant from Siemens Corporation and in part by an MIT-IBM grant on deep rationalization.

\bibliography{main.bbl}

\begin{thebibliography}{46}
\providecommand{\natexlab}[1]{#1}
\providecommand{\url}[1]{\texttt{#1}}
\expandafter\ifx\csname urlstyle\endcsname\relax
  \providecommand{\doi}[1]{doi: #1}\else
  \providecommand{\doi}{doi: \begingroup \urlstyle{rm}\Url}\fi

\bibitem[Abadi et~al.(2016)Abadi, Barham, Chen, Chen, Davis, Dean, Devin,
  Ghemawat, Irving, Isard, et~al.]{abadi2016tensorflow}
Abadi, M., Barham, P., Chen, J., Chen, Z., Davis, A., Dean, J., Devin, M.,
  Ghemawat, S., Irving, G., Isard, M., et~al.
\newblock Tensorflow: a system for large-scale machine learning.
\newblock In \emph{OSDI}, volume~16, pp.\  265--283, 2016.

\bibitem[Al-Shedivat et~al.(2017)Al-Shedivat, Dubey, and
  Xing]{al2017contextual}
Al-Shedivat, M., Dubey, A., and Xing, E.~P.
\newblock Contextual explanation networks.
\newblock \emph{arXiv preprint arXiv:1705.10301}, 2017.

\bibitem[Alvarez-Melis \& Jaakkola(2018{\natexlab{a}})Alvarez-Melis and
  Jaakkola]{alvarez2018towards}
Alvarez-Melis, D. and Jaakkola, T.
\newblock Towards robust interpretability with self-explaining neural networks.
\newblock In \emph{Advances in Neural Information Processing Systems}, pp.\
  7786--7795, 2018{\natexlab{a}}.

\bibitem[Alvarez-Melis \& Jaakkola(2017)Alvarez-Melis and
  Jaakkola]{alvarez2017causal}
Alvarez-Melis, D. and Jaakkola, T.~S.
\newblock A causal framework for explaining the predictions of black-box
  sequence-to-sequence models.
\newblock \emph{Proceedings of EMNLP}, 2017.

\bibitem[Alvarez-Melis \& Jaakkola(2018{\natexlab{b}})Alvarez-Melis and
  Jaakkola]{alvarez2018robustness}
Alvarez-Melis, D. and Jaakkola, T.~S.
\newblock On the robustness of interpretability methods.
\newblock \emph{arXiv preprint arXiv:1806.08049}, 2018{\natexlab{b}}.

\bibitem[Arjovsky et~al.(2017)Arjovsky, Chintala, and
  Bottou]{arjovsky2017wasserstein}
Arjovsky, M., Chintala, S., and Bottou, L.
\newblock Wasserstein gan.
\newblock \emph{arXiv preprint arXiv:1701.07875}, 2017.

\bibitem[Arras et~al.(2017)Arras, Horn, Montavon, M{\"{u}}ller, and
  Samek]{arras2017relevant}
Arras, L., Horn, F., Montavon, G., M{\"{u}}ller, K.-R., and Samek, W.
\newblock {" What is relevant in a text document?": An interpretable machine
  learning approach}.
\newblock \emph{PloS one}, 12\penalty0 (8):\penalty0 e0181142, 2017.

\bibitem[Bellemare et~al.(2017)Bellemare, Danihelka, Dabney, Mohamed,
  Lakshminarayanan, Hoyer, and Munos]{bellemare2017cramer}
Bellemare, M.~G., Danihelka, I., Dabney, W., Mohamed, S., Lakshminarayanan, B.,
  Hoyer, S., and Munos, R.
\newblock The cramer distance as a solution to biased wasserstein gradients.
\newblock \emph{arXiv preprint arXiv:1705.10743}, 2017.

\bibitem[Bemis \& Murcko(1996)Bemis and Murcko]{bemis1996properties}
Bemis, G.~W. and Murcko, M.~A.
\newblock The properties of known drugs. 1. molecular frameworks.
\newblock \emph{Journal of medicinal chemistry}, 39\penalty0 (15):\penalty0
  2887--2893, 1996.

\bibitem[Cisse et~al.(2017)Cisse, Bojanowski, Grave, Dauphin, and
  Usunier]{cisse2017parseval}
Cisse, M., Bojanowski, P., Grave, E., Dauphin, Y., and Usunier, N.
\newblock Parseval networks: Improving robustness to adversarial examples.
\newblock \emph{arXiv preprint arXiv:1704.08847}, 2017.

\bibitem[Citron \& Pasquale(2014)Citron and Pasquale]{citron2014scored}
Citron, D.~K. and Pasquale, F.
\newblock The scored society: due process for automated predictions.
\newblock \emph{Wash. L. Rev.}, 89:\penalty0 1, 2014.

\bibitem[Dai et~al.(2016)Dai, Dai, and Song]{dai2016discriminative}
Dai, H., Dai, B., and Song, L.
\newblock Discriminative embeddings of latent variable models for structured
  data.
\newblock In \emph{International Conference on Machine Learning}, pp.\
  2702--2711, 2016.

\bibitem[Ganin et~al.(2016)Ganin, Ustinova, Ajakan, Germain, Larochelle,
  Laviolette, Marchand, and Lempitsky]{ganin2016domain}
Ganin, Y., Ustinova, E., Ajakan, H., Germain, P., Larochelle, H., Laviolette,
  F., Marchand, M., and Lempitsky, V.
\newblock Domain-adversarial training of neural networks.
\newblock \emph{The Journal of Machine Learning Research}, 17\penalty0
  (1):\penalty0 2096--2030, 2016.

\bibitem[Ghorbani et~al.(2019)Ghorbani, Abid, and
  Zou]{ghorbani2017interpretation}
Ghorbani, A., Abid, A., and Zou, J.
\newblock Interpretation of neural networks is fragile.
\newblock \emph{AAAI}, 2019.

\bibitem[Goodfellow et~al.(2014)Goodfellow, Shlens, and Szegedy]{goodfellow14}
Goodfellow, I., Shlens, J., and Szegedy, C.
\newblock Explaining and harnessing adversarial examples.
\newblock 12 2014.

\bibitem[Griffen et~al.(2011)Griffen, Leach, Robb, and
  Warner]{griffen2011matched}
Griffen, E., Leach, A.~G., Robb, G.~R., and Warner, D.~J.
\newblock Matched molecular pairs as a medicinal chemistry tool:
  miniperspective.
\newblock \emph{Journal of medicinal chemistry}, 54\penalty0 (22):\penalty0
  7739--7750, 2011.

\bibitem[Gulrajani et~al.(2017)Gulrajani, Ahmed, Arjovsky, Dumoulin, and
  Courville]{gulrajani2017improved}
Gulrajani, I., Ahmed, F., Arjovsky, M., Dumoulin, V., and Courville, A.~C.
\newblock Improved training of wasserstein gans.
\newblock In \emph{Advances in Neural Information Processing Systems}, pp.\
  5767--5777, 2017.

\bibitem[He et~al.(2016)He, Zhang, Ren, and Sun]{he2016deep}
He, K., Zhang, X., Ren, S., and Sun, J.
\newblock Deep residual learning for image recognition.
\newblock In \emph{Proceedings of the IEEE conference on computer vision and
  pattern recognition}, pp.\  770--778, 2016.

\bibitem[Helfrich et~al.(2017)Helfrich, Willmott, and
  Ye]{helfrich2017orthogonal}
Helfrich, K., Willmott, D., and Ye, Q.
\newblock Orthogonal recurrent neural networks with scaled cayley transform.
\newblock \emph{arXiv preprint arXiv:1707.09520}, 2017.

\bibitem[Huang et~al.(2017)Huang, Liu, Van Der~Maaten, and
  Weinberger]{huang2017densely}
Huang, G., Liu, Z., Van Der~Maaten, L., and Weinberger, K.~Q.
\newblock Densely connected convolutional networks.
\newblock In \emph{CVPR}, volume~1, pp.\ ~3, 2017.

\bibitem[Jin et~al.(2018)Jin, Barzilay, and Jaakkola]{jin2018junction}
Jin, W., Barzilay, R., and Jaakkola, T.
\newblock Junction tree variational autoencoder for molecular graph generation.
\newblock \emph{arXiv preprint arXiv:1802.04364}, 2018.

\bibitem[Kingma \& Welling(2013)Kingma and Welling]{kingma2013auto}
Kingma, D.~P. and Welling, M.
\newblock Auto-encoding variational bayes.
\newblock \emph{arXiv preprint arXiv:1312.6114}, 2013.

\bibitem[Kova{\v{c}}evi{\'c} et~al.(2008)Kova{\v{c}}evi{\'c}, Chebira,
  et~al.]{kovavcevic2008introduction}
Kova{\v{c}}evi{\'c}, J., Chebira, A., et~al.
\newblock An introduction to frames.
\newblock \emph{Foundations and Trends{\textregistered} in Signal Processing},
  2\penalty0 (1):\penalty0 1--94, 2008.

\bibitem[Lakkaraju et~al.(2016)Lakkaraju, Bach, and
  Leskovec]{lakkaraju2016interpretable}
Lakkaraju, H., Bach, S.~H., and Leskovec, J.
\newblock Interpretable decision sets: A joint framework for description and
  prediction.
\newblock In \emph{Proceedings of the 22nd ACM SIGKDD international conference
  on knowledge discovery and data mining}, pp.\  1675--1684. ACM, 2016.

\bibitem[Lee et~al.()Lee, Alvarez-Melis, and Jaakkola]{lee2018game}
Lee, G.-H., Alvarez-Melis, D., and Jaakkola, T.~S.
\newblock Game-theoretic interpretability for temporal modeling.
\newblock \emph{The 5th Workshop on Fairness, Accountability, and Transparency
  in Machine Learning (FAT/ML 2018) at ICML 2018}.
\newblock URL \url{https://arxiv.org/pdf/1807.00130.pdf}.

\bibitem[Lee et~al.(2016)Lee, Qiu, Yu, Lin, and Rexnord Technical Services
  (2007).~IMS]{bearing_dataset}
Lee, J., Qiu, H., Yu, G., Lin, J., and Rexnord Technical Services (2007).~IMS,
  U. o.~C.
\newblock Bearing data set.
\newblock \emph{NASA Ames Prognostics Data Repository
  (http://ti.arc.nasa.gov/project/prognostic-data-repository), NASA Ames
  Research Center, Moffett Field, CA}, 7\penalty0 (8), 2016.

\bibitem[Lei et~al.(2016)Lei, Barzilay, and Jaakkola]{Lei2016Rationalizing}
Lei, T., Barzilay, R., and Jaakkola, T.
\newblock {Rationalizing Neural Predictions}.
\newblock In \emph{EMNLP 2016, Proceedings of the 2016 Conference on Empirical
  Methods in Natural Language Processing}, pp.\  107--117, 2016.
\newblock URL \url{http://arxiv.org/abs/1606.04155}.

\bibitem[Lei et~al.(2017)Lei, Jin, Barzilay, and Jaakkola]{lei2017deriving}
Lei, T., Jin, W., Barzilay, R., and Jaakkola, T.
\newblock Deriving neural architectures from sequence and graph kernels.
\newblock \emph{arXiv preprint arXiv:1705.09037}, 2017.

\bibitem[Mahendran \& Vedaldi(2015)Mahendran and
  Vedaldi]{mahendran2015understanding}
Mahendran, A. and Vedaldi, A.
\newblock Understanding deep image representations by inverting them.
\newblock \emph{Proceedings of the IEEE conference on computer vision and
  pattern recognition}, 2015.

\bibitem[Miller \& Hardt(2018)Miller and Hardt]{miller2018recurrent}
Miller, J. and Hardt, M.
\newblock When recurrent models don't need to be recurrent.
\newblock \emph{arXiv preprint arXiv:1805.10369}, 2018.

\bibitem[Mroueh et~al.(2018)Mroueh, Li, Sercu, Raj, and
  Cheng]{mroueh2017sobolev}
Mroueh, Y., Li, C.-L., Sercu, T., Raj, A., and Cheng, Y.
\newblock Sobolev gan.
\newblock \emph{International Conference on Learning Representations}, 2018.

\bibitem[M{\"u}ller(1997)]{muller1997integral}
M{\"u}ller, A.
\newblock Integral probability metrics and their generating classes of
  functions.
\newblock \emph{Advances in Applied Probability}, 29\penalty0 (2):\penalty0
  429--443, 1997.

\bibitem[Nemirovski(2004)]{nemirovski2004prox}
Nemirovski, A.
\newblock Prox-method with rate of convergence o (1/t) for variational
  inequalities with lipschitz continuous monotone operators and smooth
  convex-concave saddle point problems.
\newblock \emph{SIAM Journal on Optimization}, 15\penalty0 (1):\penalty0
  229--251, 2004.

\bibitem[Pasquale(2015)]{pasquale2015black}
Pasquale, F.
\newblock \emph{The black box society: The secret algorithms that control money
  and information}.
\newblock Harvard University Press, 2015.

\bibitem[Pedregosa et~al.(2011)Pedregosa, Varoquaux, Gramfort, Michel, Thirion,
  Grisel, Blondel, Prettenhofer, Weiss, Dubourg, et~al.]{pedregosa2011scikit}
Pedregosa, F., Varoquaux, G., Gramfort, A., Michel, V., Thirion, B., Grisel,
  O., Blondel, M., Prettenhofer, P., Weiss, R., Dubourg, V., et~al.
\newblock Scikit-learn: Machine learning in python.
\newblock \emph{Journal of machine learning research}, 12\penalty0
  (Oct):\penalty0 2825--2830, 2011.

\bibitem[Quinlan(2014)]{quinlan2014c4}
Quinlan, J.~R.
\newblock \emph{C4. 5: programs for machine learning}.
\newblock Elsevier, 2014.

\bibitem[Ribeiro et~al.(2016)Ribeiro, Singh, and Guestrin]{Ribeiro2016Why}
Ribeiro, M.~T., Singh, S., and Guestrin, C.
\newblock {"Why Should I Trust You?": Explaining the Predictions of Any
  Classifier}.
\newblock In \emph{Proceedings of the 22Nd ACM SIGKDD International Conference
  on Knowledge Discovery and Data Mining}, pp.\  1135--1144, New York, NY, USA,
  2016. ACM.
\newblock ISBN 978-1-4503-4232-2.
\newblock \doi{10.1145/2939672.2939778}.
\newblock URL \url{http://arxiv.org/abs/1602.04938
  http://doi.acm.org/10.1145/2939672.2939778}.

\bibitem[Rogers \& Hahn(2010)Rogers and Hahn]{rogers2010extended}
Rogers, D. and Hahn, M.
\newblock Extended-connectivity fingerprints.
\newblock \emph{Journal of chemical information and modeling}, 50\penalty0
  (5):\penalty0 742--754, 2010.

\bibitem[Selvaraju et~al.(2016)Selvaraju, Cogswell, Das, Vedantam, Parikh, and
  Batra]{selvaraju2016grad}
Selvaraju, R.~R., Cogswell, M., Das, A., Vedantam, R., Parikh, D., and Batra,
  D.
\newblock Grad-cam: Visual explanations from deep networks via gradient-based
  localization.
\newblock \emph{https://arxiv. org/abs/1610.02391 v3}, 7\penalty0 (8), 2016.

\bibitem[Shamir(2015)]{shamir2015sample}
Shamir, O.
\newblock The sample complexity of learning linear predictors with the squared
  loss.
\newblock \emph{The Journal of Machine Learning Research}, 16\penalty0
  (1):\penalty0 3475--3486, 2015.

\bibitem[Silver et~al.(2016)Silver, Huang, Maddison, Guez, Sifre, Van
  Den~Driessche, Schrittwieser, Antonoglou, Panneershelvam, Lanctot,
  et~al.]{silver2016mastering}
Silver, D., Huang, A., Maddison, C.~J., Guez, A., Sifre, L., Van Den~Driessche,
  G., Schrittwieser, J., Antonoglou, I., Panneershelvam, V., Lanctot, M.,
  et~al.
\newblock Mastering the game of go with deep neural networks and tree search.
\newblock \emph{nature}, 529\penalty0 (7587):\penalty0 484, 2016.

\bibitem[Sterling \& Irwin(2015)Sterling and Irwin]{sterling2015zinc}
Sterling, T. and Irwin, J.~J.
\newblock Zinc 15--ligand discovery for everyone.
\newblock \emph{Journal of chemical information and modeling}, 55\penalty0
  (11):\penalty0 2324--2337, 2015.

\bibitem[Vaswani et~al.(2017)Vaswani, Shazeer, Parmar, Uszkoreit, Jones, Gomez,
  Kaiser, and Polosukhin]{vaswani2017attention}
Vaswani, A., Shazeer, N., Parmar, N., Uszkoreit, J., Jones, L., Gomez, A.~N.,
  Kaiser, {\L}., and Polosukhin, I.
\newblock Attention is all you need.
\newblock In \emph{Advances in Neural Information Processing Systems}, pp.\
  5998--6008, 2017.

\bibitem[Wu et~al.(2018{\natexlab{a}})Wu, Hughes, Parbhoo, Zazzi, Roth, and
  Doshi{-}Velez]{wu2018tree}
Wu, M., Hughes, M.~C., Parbhoo, S., Zazzi, M., Roth, V., and Doshi{-}Velez, F.
\newblock Beyond sparsity: Tree regularization of deep models for
  interpretability.
\newblock In \emph{Proceedings of the Thirty-Second {AAAI} Conference on
  Artificial Intelligence, New Orleans, Louisiana, USA, February 2-7, 2018},
  2018{\natexlab{a}}.
\newblock URL
  \url{https://www.aaai.org/ocs/index.php/AAAI/AAAI18/paper/view/16285}.

\bibitem[Wu et~al.(2018{\natexlab{b}})Wu, Ramsundar, Feinberg, Gomes, Geniesse,
  Pappu, Leswing, and Pande]{wu2018moleculenet}
Wu, Z., Ramsundar, B., Feinberg, E.~N., Gomes, J., Geniesse, C., Pappu, A.~S.,
  Leswing, K., and Pande, V.
\newblock Moleculenet: a benchmark for molecular machine learning.
\newblock \emph{Chemical science}, 9\penalty0 (2):\penalty0 513--530,
  2018{\natexlab{b}}.

\bibitem[Zhao et~al.(2017)Zhao, Yue, Katabi, Jaakkola, and
  Bianchi]{zhao2017learning}
Zhao, M., Yue, S., Katabi, D., Jaakkola, T.~S., and Bianchi, M.~T.
\newblock Learning sleep stages from radio signals: a conditional adversarial
  architecture.
\newblock In \emph{International Conference on Machine Learning}, pp.\
  4100--4109, 2017.

\end{thebibliography}
\bibliographystyle{icml2019}

\clearpage
\appendix
\section{Proofs}\label{sec:proof}

Our main results in this section make the following assumptions.

\begin{itemize}[label={}, leftmargin=-0.5mm]
\vspace{-3.5mm}
\setlength\itemsep{-0.02em}
  \item \header{(A1)} the predictor $f$ is unconstrained.
  \vspace{-1mm}
  \item \header{(A2)} both the loss and deviation are squared errors.
  \vspace{-1mm}
  \item \header{(A3)} $|\mathcal{B}(x_i)| = m, \forall x_i \in \mathcal{D}_x$.
  \vspace{-1mm}
  \item \header{(A4)} $x_j \in \mathcal{B}(x_i) \implies x_i \in \mathcal{B}(x_j), \forall x_i, x_j \in \mathcal{D}_x$.
  \vspace{-1mm}
  \item \header{(A5)} $\cup_{x_i \in \mathcal{D}_x} \mathcal{B}(x_i) = \mathcal{D}_x$.
\vspace{-2.5mm}
\end{itemize}
We note that \textbf{(A3)} and \textbf{(A4)} are not technically necessary but simplify the presentation. We denote the predictor in the uniform criterion (Eq.~\eqref{eq:uniform:game}), the symmetric game (Eq.~\eqref{eq:coop_game}), and the asymmetric (Eq.~\eqref{eq:asym_coop_game}) game as $f_U$, $f_S$, and $f_A$, respectively. 
We use $X_i\in \mathbb{R}^{m\times d}$ to denote the neighborhood $\mathcal{B}(x_i) = \{x'_1,\dots,x'_m\}$ ($X_i = [x'_1,\dots,x'_m]^\top$), and $f(X_i)\in \mathbb{R}^m$ to denote the vector $[f(x'_1),\dots,f(x'_m)]^\top$. $X_j^\dagger$ denotes the pseudo-inverse of $X_j$. Then we have
\begin{theorem5*}
If \emph{\textbf{(A1-5)}} hold and the witness is in the linear family, 
the optimal $f_S$ satisfies 
\vspace{-2mm}
\begin{equation}
f^*_S(x_i) = \frac{1}{1+\lambda} \bigg[ y_i + \frac{\lambda}{m} \bigl(\sum_{x_j \in \mathcal{B}(x_i)} X^\dagger_j f^*_S(X_j) \bigr)^\top x_i \bigg], \nonumber
\vspace{-2mm}
\end{equation}
and the optimal $f_A$, at every equilibrium, is the fixed point
\begin{equation}
\vspace{-2mm}
f^*_A(x_i) = \frac{1}{1+\lambda} \bigg[ y_i + \lambda  (X^\dagger_if^*_A(X_i) )^\top  x_i \bigg], \forall x_i \in \mathcal{D}_x. \nonumber 
\end{equation}
\end{theorem5*}


\begin{proof}
We first re-write the symmetric criterion explicitly as a game:
\begin{equation}
\min_f \sum_i (f(x_i) - y_i)^2 + \frac{\lambda}{m} \sum_{x_j \in \mathcal{B}(x_i)} (f(x_j) - \hat{g}_{x_i}(x_j))^2, \nonumber
\end{equation}
where $\hat{g}_{x_i}$ is the best response strategy from the local witness. 

Since $f$ is unconstrained and the objective in convex in it, we can treat each $f(x_i)$ as a distinct variable, and use the derivative to find its optimum:
\begin{align}
f^*_S(x_i) & = \frac{1}{1+\lambda}\bigg[ y_i + \frac{\lambda}{m} \sum_{x_j \in \mathcal{B}^{-1}(x_i)} \hat{g}_{x_j}(x_i) \bigg] \nonumber\\
& = \frac{1}{1+\lambda}\bigg[ y_i + \frac{\lambda}{m} \sum_{x_j \in \mathcal{B}(x_i)} \hat{g}_{x_j}(x_i) \bigg],
\label{eq:optimal_sym}
\end{align}
where $\mathcal{B}^{-1}(x_i) = \{x_j \in \mathcal{D}_x: x_i \in \mathcal{B}(x_j)\}$.
Note that we only have to collect witnesses $\hat{g}_{x_j}$ that are relevant to $f(x_i)$ for the first equality, and the second equality is due to \textbf{(A4)}. 
On the other hand, the objective for $f$ in the asymmetric game is:
\begin{equation}
\min_f \sum_i (f(x_i) - y_i)^2 + \lambda (f(x_i) - \hat{g}_{x_i}(x_i))^2, \nonumber
\end{equation}
The corresponding optimum is:
\begin{equation}
f^*_A(x_i) = \frac{1}{1+\lambda}\bigg[ y_i + \lambda \hat{g}_{x_i}(x_i) \bigg]
\label{eq:optimal_asym}
\end{equation}

For both games, the objective for $g_{x_i}$ can be described as:
\begin{align}
& \min_{g_{x_i}} \frac{\lambda}{m} \sum_{x_j \in \mathcal{B}(x_i)} (f(x_j) - g_{x_i}(x_j))^2 \nonumber\\
& = \min_{\theta_i} \frac{\lambda}{m} \|f(X_i) - X_i \theta_i \|_2^2, 
\label{eq:refer_game_linear}
\end{align}
Then Eq. (\ref{eq:optimal_linear}) is an optimal witness $g^*_{x_i}$ at $x_i$.
\begin{equation}
g^*_{x_i}(x_j) = \theta^\top_i x_j  = (X^\dagger_i f(X_i))^\top x_j, \forall x_j \in \mathcal{X},
\label{eq:optimal_linear}
\end{equation}
and we note that every optimal witness $g^*_{x_i}$ has the same values on $\mathcal{B}(x_i)$

Since the optimal $g^*_{x_i}$ is functionally dependent to $f$. we put Eq. (\ref{eq:optimal_linear}) back to Eq. (\ref{eq:optimal_sym}) to obtain the optimal condition for $f^*_S$ (at equilibrium) as 
\begin{align}
&f^*_S(x_i) = \frac{1}{1+\lambda} \bigg[ y_i + \frac{\lambda}{m} (\sum_{x_j \in \mathcal{B}(x_i)} X^\dagger_j f^*_S(X_j) )^\top x_i \bigg].\nonumber
\label{eq:expand_dplinear_sym}
\end{align}
Again, putting Eq. (\ref{eq:optimal_linear}) back to Eq. (\ref{eq:optimal_asym}), we obtain the optimal condition for $f^*_A$ at equilibrium as 
\begin{align}
&f^*_A(x_i) = \frac{1}{1+\lambda} \bigg[ y_i + \lambda  (X^\dagger_if^*_A(X_i) )^\top  x_i \bigg].\nonumber
\end{align}
\end{proof}


Note that the equilibrium for the linear class is not unique when the solution of Eq.~(\ref{eq:refer_game_linear}) is not unique: there may be infinitely many optimal solution to the witness in a neighborhood due to degeneracy. In this case, Theorem \ref{lemma:linear_case} adopts the minimum norm solution as used in the pseudo-inverse in Eq.~(\ref{eq:optimal_linear}). 
In this case, one may use Ridge regression instead to establish a strongly convex objective for the witness to ensure a unique solution, where the objective for the witness is rewritten as
\begin{equation}
    \min_{\theta_i} \frac{\lambda}{m} \|f(X_i) - X_i\theta_i\|^2_2 + \alpha \|\theta_i\|^2_2,
\end{equation}
with a positive $\alpha$. 


\begin{theorem6*}
If \emph{\textbf{(A1-5)}} hold and the witness is in the linear family, the optimal $f_U$ satisfies
\begin{align}
    f_U^*(x_i) = \left \{
  \begin{aligned}
    & \alpha(x_i, f_U^*), && \text{if}\ \alpha(x_i, f_U^*) > y_i, \\
    & \beta(x_i, f_U^*), && \text{if}\ \beta(x_i, f_U^*) < y_i, \\
    & y_i, && \text{otherwise,}
  \end{aligned} \right. \nonumber
\end{align}
for $x_i \in \mathcal{D}_x$, where
\begin{align*}
    & \alpha(x_i, f^*_U) = \max_{x_j \in \mathcal{B}(x_i)} \bigg[ (X_j^\dagger f^*_U(X_j))^\top x_i \\
    & - \sqrt{\delta m - \sum_{x_k \in \mathcal{B}(x_j)\backslash \{x_i\} }  (f^*_U(x_k) -  (X_j^\dagger f^*_U(X_j))^\top x_k )^2} \bigg];\\
    & \beta(x_i, f^*_U) = \min_{x_j \in \mathcal{B}(x_i)} \bigg[ (X_j^\dagger f^*_U(X_j))^\top x_i \\
    & + \sqrt{\delta m - \sum_{x_k \in \mathcal{B}(x_j)\backslash \{x_i\} }  (f^*_U(x_k) -  (X_j^\dagger f^*_U(X_j))^\top x_k )^2} \bigg].
\end{align*}
\label{theorem:appendix:uniform}
\end{theorem6*}

\begin{proof}
The objective for the uniform criterion is:
\begin{align}
    & \min_f \sum_{i=1}^N (f(x_i) - y_i)^2 \label{eq:unif:orig:game}\\
    & s.t.\;\; \min_{g \in \mathcal{G}} \frac{1}{m} \sum_{x_j \in \mathcal{B}(x_i)} (f(x_j) - g(x_j))^2 \leq \delta, \forall x_i \in \mathcal{D}_x. \nonumber
\end{align}
Our strategy is to temporarily treat each $g$ as a fixed function, and then replace it with its best response strategy. 

Since $f$ is unconstrained (in capacity), we can treat each $f(x_i)$ as a distinct variable for optimization.
For each $f(x_i)$, we first filter its relevant criteria:
\begin{align}
    \min_{f(x_i)} & (f(x_i) - y_i)^2 \nonumber\\
    s.t.\;\; & (f(x_i) - g_{x_j}(x_i))^2, \leq \delta m \nonumber \\
    & - \sum_{x_k \in \mathcal{B}(x_j) \backslash \{x_i\} } (f(x_k) - g_{x_j}(x_k))^2, \forall x_j \in \mathcal{B}(x_i). \nonumber
\end{align}
For any feasible $f$, we can further rewrite the constraint of $f(x_i)$ with respect to each $x_j$ as:
\begin{align*}
& g_{x_j}(x_i) - \sqrt{\delta m - \sum_{x_k \in \mathcal{B}(x_j)\backslash \{x_i\} } (f(x_k) - g_{x_j} (x_k))^2} \\
& \leq f(x_i) \\
& \leq g_{x_j}(x_i) + \sqrt{\delta m - \sum_{x_k \in \mathcal{B}(x_j)\backslash \{x_i\} } (f(x_k) - g_{x_j} (x_k))^2}.
\end{align*}
Collectively, we can fold all the upper bounds of $f(x_i)$ as
\begin{align*}
    & f(x_i) \leq \min_{x_j \in \mathcal{B}(x_i)} \bigg[ g_{x_j}(x_i) \\
    & + \sqrt{\delta m - \sum_{x_k \in \mathcal{B}(x_j)\backslash \{x_i\} } (f(x_k) - g_{x_j} (x_k))^2} \bigg].
\end{align*}
All the lower bounds can be folded similarly. 

Finally, since the objective for $f(x_i)$ is simply a squared error with an interval constraint, evidently if $y_i$ satisfies the lower bounds and upper bounds, then $f_U^*(x_i) = y_i$. If 
\begin{align*}
    & y_i > \min_{x_j \in \mathcal{B}(x_i)} \bigg[ g_{x_j}(x_i) \\
    & + \sqrt{\delta m - \sum_{x_k \in \mathcal{B}(x_j)\backslash \{x_i\} } (f(x_k) - g_{x_j} (x_k))^2} \bigg],
\end{align*}
then we have 
\begin{align*}
    & f_U^*(x_i) = \min_{x_j \in \mathcal{B}(x_i)} \bigg[ g_{x_j}(x_i) \\
    & + \sqrt{\delta m - \sum_{x_k \in \mathcal{B}(x_j)\backslash \{x_i\} } (f(x_k) - g_{x_j} (x_k))^2} \bigg].
\end{align*}
Otherwise, we have
\begin{align*}
    & f_U^*(x_i) = \max_{x_j \in \mathcal{B}(x_i)} \bigg[ g_{x_j}(x_i) \\
    & - \sqrt{\delta m - \sum_{x_k \in \mathcal{B}(x_j)\backslash \{x_i\} } (f(x_k) - g_{x_j} (x_k))^2} \bigg].
\end{align*}

For each $g_{x_i}$ is in the linear class, Eq.~(\ref{app:unif:linear}) is an optimal solution. 
\begin{equation}
    g^*_{x_j}(x_i) = (X_j^\dagger f(X_j))^\top x_i, \forall x_i \in \mathcal{X}, \label{app:unif:linear}
\end{equation}
and we note that every optimal witness $g^*_{x_j}$ has the same values on $\mathcal{B}(x_j)$.

Since the optimal $g^*_{x_i}$ is functionally dependent to $f$, to obtain the optimal $f^*_U$, we combine our previous result with $g^*_{x_i}$ such that the optimality conditions for $f$ and $g_{x_i}$ are both satisfied. Finally, we have
\begin{align}
    f_U^*(x_i) = \left \{
  \begin{aligned}
    & \alpha(x_i, f_U^*), && \text{if}\ \alpha(x_i, f_U^*) > y_i, \\
    & \beta(x_i, f_U^*), && \text{if}\ \beta(x_i, f_U^*) < y_i, \\
    & y_i, && \text{otherwise,}
  \end{aligned} \right. \nonumber
\end{align}
for $x_i \in \mathcal{D}_x$, where
\begin{align*}
    & \alpha(x_i, f^*_U) = \max_{x_j \in \mathcal{B}(x_i)} \bigg[ (X_j^\dagger f^*_U(X_j))^\top x_i \\
    & - \sqrt{\delta m - \sum_{x_k \in \mathcal{B}(x_j)\backslash \{x_i\} }  (f^*_U(x_k) -  (X_j^\dagger f^*_U(X_j))^\top x_k )^2} \bigg];\\
    & \beta(x_i, f^*_U) = \min_{x_j \in \mathcal{B}(x_i)} \bigg[ (X_j^\dagger f^*_U(X_j))^\top x_i \\
    & + \sqrt{\delta m - \sum_{x_k \in \mathcal{B}(x_j)\backslash \{x_i\} }  (f^*_U(x_k) -  (X_j^\dagger f^*_U(X_j))^\top x_k )^2} \bigg].\\
\end{align*}
\end{proof}

\begin{lemma7*}
If $d(\cdot, \cdot)$ is squared error, $\mathcal{L}(\cdot, \cdot)$ is differentiable, $f$ is sub-differentiable, and \emph{$\textbf{A(4-5)}$} hold, then 
\vspace{-1.5mm}
\begin{equation}
\sum_{(x_i, y_i) \in \mathcal{D}}\! \mathcal{L}(f(x_i), y_i) + \frac{\lambda}{\bar{N}_i}\bigg[ \bar{N}_i f(x_i) - \!\!\!\! \sum_{x_t\in \mathcal{B}(x_i)} \! \frac{\hat{g}_{x_t}(x_i)}{|\mathcal{B}(x_t)|} \bigg]^2\!\!\!, \! \label{eq:adjusted_game}
\end{equation}
where $\bar{N}_i := \sum_{x_t \in \mathcal{B}(x_i)}\! \frac{1}{|\mathcal{B}(x_t)|}$, induces the same equilibrium as the symmetric game. 
\end{lemma7*}

\begin{proof}
Since the criteria for the witness $g_{x_i}$ are the same in the symmetric game and the proposed asymmetric criterion here, we only have to check for the optimality condition for the predictor $f$. If we use $\nabla_\theta f(x)$ to denote the subgradient of $f$ at $x$ with respect to the underlying parameter $\theta$, the optimality condition for Eq.~(\ref{eq:adjusted_game}) is 
\begin{align}
    0 \in & \sum_{(x_i, y_i) \in \mathcal{D}} \bigg[   \frac{\partial }{\partial f(x_i)} \mathcal{L}(f(x_i), y_i) \nonumber\\
    & + 2 \lambda (\sum_{x_t \in \mathcal{B} (x_i)} \frac{f(x_i)}{|\mathcal{B}(x_t)|} - \sum_{x_t \in \mathcal{B}(x_i)} \frac{\hat{g}_{x_t} (x_i)}{|\mathcal{B}(x_t)|} )   \bigg] \nabla_\theta f(x_i) \nonumber\\
    = &  \sum_{(x_i, y_i) \in \mathcal{D}} \bigg[ \frac{\partial }{\partial f(x_i)} \mathcal{L}(f(x_i), y_i) \nabla_\theta f(x_i) \nonumber\\
    & + \sum_{x_t \in \mathcal{B} (x_i)} \frac{2 \lambda}{|\mathcal{B}(x_t)|}(f(x_i) - \hat{g}_{x_t} (x_i)) \nabla_\theta f(x_i)   \bigg] \nonumber
\end{align}
For the symmetric game, the optimality condition is
\begin{align}
    0 \in & \sum_{(x_i, y_i) \in \mathcal{D}} \bigg[  \frac{\partial}{\partial f(x_i)} \mathcal{L}(f(x_i), y_i) \nabla_\theta f(x_i) \nonumber \\
    & + \sum_{x_t \in \mathcal{B}(x_i)} \frac{2\lambda}{|\mathcal{B}(x_i)|} (f(x_t) - \hat{g}_{x_i}(x_t)) \nabla_\theta f(x_t) \bigg] \nonumber
\end{align}
It is evident that the two conditions coincide if Eq.~(\ref{eq1}) is equal to Eq.~(\ref{eq4}).
\begin{align}
    & \sum_{(x_i, y_i) \in \mathcal{D}} \sum_{x_t \in \mathcal{B}(x_i)} \frac{1}{|\mathcal{B}(x_i)|} (f(x_t) - \hat{g}_{x_i}(x_t))\label{eq1} \nabla_\theta f(x_t)\\
    & = \sum_{x_t \in \cup_{x_i \in \mathcal{D}_x}\mathcal{B}(x_i)} \sum_{x_i \in \mathcal{B}^{-1}(x_t)} \nonumber \\
    & \;\;\;\;\;\;\;\;\;\;\;\;\;\;\;\;\;\;\;\;\;\;\;\;\;\;\frac{1}{|\mathcal{B}(x_i)|} (f(x_t) - \hat{g}_{x_i}(x_t)) \nabla_\theta f(x_t)\nonumber\\
    & = \sum_{x_t \in \mathcal{D}_x} \sum_{x_i \in \mathcal{B}(x_t)} \frac{1}{|\mathcal{B}(x_i)|} (f(x_t) - \hat{g}_{x_i}(x_t)) \nabla_\theta f(x_t)\nonumber\\
    & = \sum_{(x_i, y_i) \in \mathcal{D}} \sum_{x_t \in \mathcal{B}(x_i)} \frac{1}{|\mathcal{B}(x_t)|} (f(x_i) - \hat{g}_{x_t} (x_i)) \nabla_\theta f(x_i) \label{eq4},
\end{align}
where the first equality is simply re-ordering of the two summations, and the second equality is due to $x_t \in\mathcal{B}(x_i) \iff x_i \in \mathcal{B}(x_t)$ and $\cup_{x_i \in \mathcal{D}_x}\mathcal{B}(x_i) = \mathcal{D}_x$.
\end{proof}

\begin{figure}[t]
\centering
  \centering 
  \includegraphics[width=0.9\linewidth]{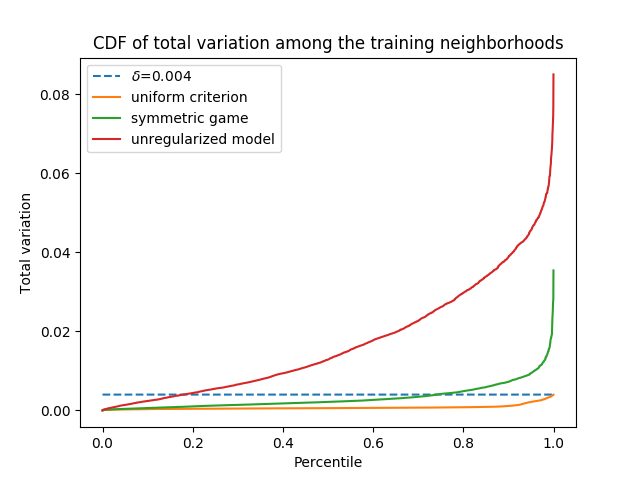}
  \caption{The cumulative distribution function of the total variation loss between the predictor $f$ and the local witness $g$ in each training neighborhood.}
  \label{fig:train:tv}
\end{figure}

\begin{figure*}[t]
\centering
  \includegraphics[width=1.\linewidth]{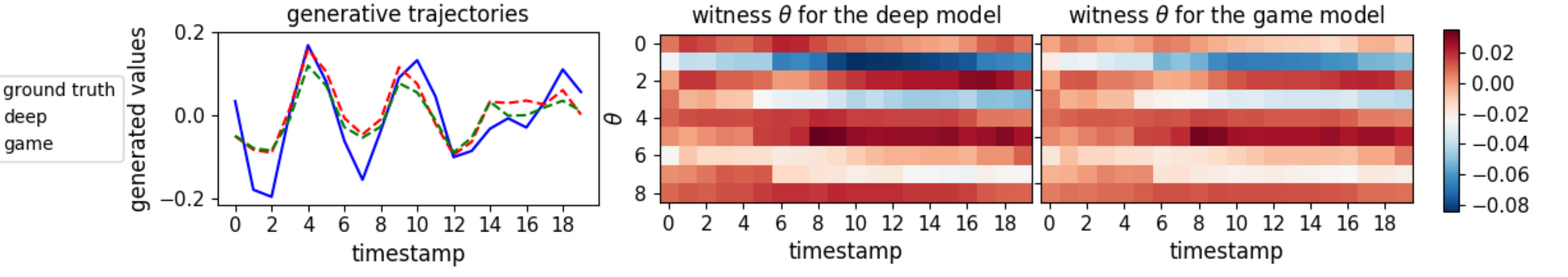}
  \caption{Visualization of the witnesses with the their parameters (middle and right plots) for \emph{teacher-forced} predictions on the first channel (left plot) along each timestamp ($x$-axis) on the bearing dataset. The $y$-axis of the parameters from $0$ to $8$ denotes the bias $(\theta_0)_1$ and weights $(\theta_1)_{1, 1:4}, (\theta_2)_{1, 1:4}$.}
  \label{fig:visual_tf}
\end{figure*}





\section{Supplementary Materials for Molecule Property Prediction}\label{appendix:molecule}


\header{Implementation.}
To conduct training, we use GCNs as the predictor with 6 layers of graph convolution with $1800$ hidden dimension. We use a $80\% / 10\% / 10\%$ split for training / validation / testing.


\header{Visualization.}
To investigate the behavior of the models, we plot their total variation loss from the local witness among the training neighborhoods in Figure~\ref{fig:train:tv}. The uniform criterion imposes a strict functional constraint, while the symmetric game allows a more flexible model, exhibiting a tiny fraction of high deviation among the training neighborhoods.

\section{Supplementary Materials for Physical Component Modeling}~\label{app:time_series}

\header{Implementation.}
We randomly sample $85\%$, $5\%$, and $10\%$ of the data for training, validation, and testing. 
All the hidden dimensions are set to $128$. We use the \texttt{MultivariateNormalTriL} function in Tensorflow~\cite{abadi2016tensorflow} to parametrize the multivariate Gaussian distribution. Specifically, we let the network output a $N + \frac{(N+1)(N)}{2}$ dimensional vector. The first $N$ dimensions are treated as the mean. The second part is transformed to a lower triangular matrix, where the diagonal is further processed with a softplus nonlinearity. Such representation satisfies the Cholesky decomposition for covariance matrix. 

For fitting the linear witness, we use Ridge regression in \texttt{scikit-learn}~\cite{pedregosa2011scikit} with the default hyperparameter. The usage of Ridge regression instead of vanilla linear regression is justified by our analysis of the equilibrium for linear witnesses.

\header{Visualization.}
The visualization for the teacher-forced generative trajectory is in Figure~\ref{fig:visual_tf}.

\header{Neighborhood size analysis}

\begin{figure}[tbpb]
\centering
  \centering 
  \includegraphics[width=0.9\linewidth]{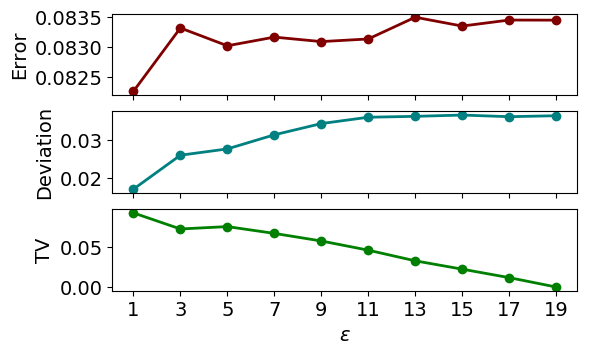}
  \caption{Parameter analysis of $\epsilon$ on the \textsc{Game} model with $\lambda=1$.}
  \label{fig:neighborhood_size}
\end{figure}

Here we investigate the effect of neighborhood radius $\epsilon$. The results are shown in Figure~\ref{fig:neighborhood_size}. The impact of the neighborhood size is quite monotonic to deviation and TV, but in a reverse way. As $\epsilon$ increases, the weight of the witness on fitting the current point $x_i$ among the neighborhood $\mathcal{B}(x_i)$ decreases, so the deviation of the witness $\hat{g}_{x_i}(x_i)$ from $f(x_i)$ increases. In contrast, as more points are overlapped between the neighborhoods of consecutive points, the resulting witnesses are more similar and thus yield smaller TV. In terms of prediction error, as the neighborhood radius $\epsilon$ determines the region to impose coherency, a larger region leads to greater restriction on the predictive model. All the arguments are well supported by the empirical results. We suggest users to trade off {faithfulness (deviation) and smooth transition of functional properties (TV)} based on the application at hand. We note that, however, smooth transition of functional properties is not equivalent to smoothness of $f$.

Finally, we remark that our sample complexity analysis for the linear class suggests that the neighborhood size is guaranteed to be effective when $2 \epsilon + 1 > d = 2c + 1 = 9$. However, since the result is an sufficient condition, the regularization may still happens when $\epsilon < 5$ (e.g., if the matrix rank of a neighborhood $X_i = [x_{i-\epsilon},\dots,x_{i+\epsilon}]^\top$ is less than $\min\{d, m\} = \min\{2c + 1, 2\epsilon + 1\}$). 


\end{document}